\documentclass{article} 
\usepackage[final]{colm2026_conference}

\usepackage{microtype}
\usepackage{hyperref}
\usepackage{url}
\usepackage{booktabs}

\usepackage{times}
\usepackage{latexsym}
\usepackage[T1]{fontenc}
\usepackage[utf8]{inputenc}
\usepackage{microtype}
\usepackage{inconsolata}

\usepackage{amsmath}
\usepackage{amssymb}
\usepackage{amsfonts}
\usepackage{mathtools}
\usepackage{amsthm}
\usepackage{nicefrac}

\usepackage{booktabs}
\usepackage{multirow}
\usepackage{threeparttable}
\usepackage{tabularx}
\usepackage{colortbl}

\usepackage{graphicx}
\usepackage{subcaption}
\usepackage{wrapfig}
\usepackage{xcolor}

\usepackage{tikz}
\usepackage{pgfplots}
\usetikzlibrary{intersections}
\usepgfplotslibrary{fillbetween,groupplots,dateplot}
\pgfplotsset{compat=newest, scaled y ticks=false}

\usepackage{algorithm}
\usepackage{algpseudocode}

\usepackage{url}
\usepackage{bookmark}
\usepackage[capitalize,noabbrev]{cleveref}

\usepackage{pifont}

\usepackage{xspace}

\theoremstyle{plain}
\newtheorem{theorem}{Theorem}[section]
\newtheorem{proposition}[theorem]{Proposition}

\theoremstyle{definition}

\theoremstyle{remark}
\newtheorem{remark}[theorem]{Remark}

\usepackage{etoc}
\usepackage{newtxtext}

\newcommand{\Mh}{\mathbf{M}_h}
\newcommand{\Wq}{\mathbf{W}_q}
\newcommand{\Wk}{\mathbf{W}_k}
\newcommand{\Method}{QK Product Steering}

\usepackage{lineno}

\definecolor{darkblue}{rgb}{0, 0, 0.5}
\hypersetup{colorlinks=true, citecolor=darkblue, linkcolor=darkblue, urlcolor=darkblue}

\title{Spectral Query-Key Product Weight Steering for Training-Free VLM Hallucination Mitigation}

\author{
Karn Tiwari$^{1}$ \quad
Varnith Chordia$^{2}$ \quad
Prathosh A P$^{1, 3}$ \\
$^{1}$Indian Institute of Science, Bengaluru \\
$^{2}$Snap Inc. \\
$^{3}$LatentForce.ai \\
\texttt{\{karntiwari, prathosh\}@iisc.ac.in} \quad
\texttt{vchordia@snapchat.com}
}

\begin{document}

\ifcolmsubmission
\linenumbers
\fi

\maketitle

\begin{abstract}
Vision-language models (VLMs) often generate fluent but visually unsupported descriptions, especially by mentioning objects absent from the image. We propose \emph{QK Product Steering}, a data-free, training-free, and zero-inference-cost weight edit for reducing object hallucination. The method directly edits the per-head query-key product, the operator that produces pre-softmax attention logits, by suppressing a small number of dominant singular modes in selected middle layers. The edited product is then mapped back to the query weights through a closed-form query-only update while keeping shared key weights fixed, making the edit compatible with grouped-query attention. We further decompose the QK product into symmetric and antisymmetric components to distinguish mutual content-similarity patterns from directional attention patterns. Across three GQA-based VLMs, QK Product Steering achieves an average relative CHAIR$_s$ reduction of $4.0\%$, while matched random-mode controls show negligible change. Interpretability ablations show that the hallucination signal is specific to dominant QK modes and is primarily localized to the symmetric mutual-attention channel. Overall, QK Product Steering offers a simple alternative to decoding-time mitigation, requiring no additional data, fine-tuning, or inference-time overhead while largely preserving general multimodal capability.
\end{abstract}

\etocdepthtag.toc{mtmain}

\section{Introduction}
\label{sec:intro}
\vspace{-0.2cm}
Vision-language models (VLMs) have become increasingly capable of generating fluent descriptions and answers from visual inputs. However, they still suffer from \emph{object hallucination}: generating objects that are not present in the image. This failure is especially problematic because hallucinated descriptions are often coherent and confident, making them difficult to detect. Reducing object hallucination is therefore essential for building reliable and trustworthy VLMs.

A key source of this problem is the competition between visual evidence and learned vision-language priors. During generation, a VLM must decide whether to rely on the image or on object co-occurrence patterns learned during large-scale pretraining. When these priors dominate, the model may mention plausible objects that fit the textual context but are unsupported by the visual input. Effective hallucination mitigation should therefore weaken such prior-driven behavior while preserving the model's general multimodal capability.

Existing methods address this problem mainly through additional training or inference-time intervention. Training-based approaches require extra data, supervision, and compute, making them expensive to apply across models. Training-free decoding methods avoid fine-tuning, but they alter generation through contrastive decoding, logit adjustment, attention reweighting, or penalty terms, introducing additional cost at every decoding step. A more practical alternative is a one-shot weight edit: modify the model once and then keep inference unchanged.

In this work, we propose QK Product Steering, a data-free, training-free, and zero-inference-cost weight-editing method for mitigating object hallucination in VLMs. Our key idea is to edit the query-key product, the operator that directly determines pre-softmax attention logits. Instead of modifying decoding, QK Product Steering identifies dominant singular modes of the per-head query-key product in selected middle layers and suppresses a small number of them. The edited product is then projected back into the query weights through a closed-form query-only update while keeping the shared key weights unchanged, making the method compatible with grouped-query attention.

The motivation behind QK Product Steering is that dominant query-key modes may capture high-energy attention directions that behave like generic vision-language priors. These directions support fluent generation, but they can also overpower image evidence and produce visually unsupported objects. By damping only a small number of dominant modes in middle layers, QK Product Steering weakens these overactive priors while preserving most of the model's original computation.

Beyond the main spectral edit, we further decompose the query-key product into symmetric and antisymmetric components. This allows us to separate \emph{mutual} content-similarity patterns from \emph{directional} attention patterns. Our analysis shows that the hallucination signal is primarily concentrated in the symmetric mutual-attention channel, while the antisymmetric channel carries little detectable hallucination effect. This provides a mechanistic explanation for why suppressing dominant query-key modes reduces hallucination: the edit weakens strong mutual vision-language priors rather than arbitrarily perturbing the model. Our contributions are summarized as follows:
\begin{itemize}
    \vspace{-0.2cm}
    \item We introduce \emph{QK Product Steering}, a data-free, training-free, and zero-inference-cost method for mitigating object hallucination in VLMs via a one-shot query-key product weight edit.
    \item We propose a GQA-safe query-only projection that realizes the edited query-key product while keeping shared key weights unchanged, preserving the standard inference procedure.
    \item We provide controlled ablations and symmetric/antisymmetric decomposition analysis showing that hallucination is specific to dominant QK modes and is primarily localized to the symmetric mutual-attention channel.
    \vspace{-0.2cm}
\end{itemize}

\vspace{-0.2cm}
\section{Related Work}
\label{sec:related}
\vspace{-0.2cm}

\textbf{Large Vision-Language Models.}
Large vision-language models (LVLMs) combine visual encoders with language models through cross-attention, query transformers, or multimodal instruction tuning, enabling open-ended captioning and visual question answering~\cite{alayrac2022flamingo,li2023blip2,dai2023instructblip,liu2023llava,bai2023qwen}. Recent models such as Qwen2.5-VL, InternVL3, and Pixtral-12B improve high-resolution perception and multimodal reasoning~\cite{qwen25vl,internvl3,pixtral2024}, but their generative flexibility also makes them prone to visually unsupported outputs.

\noindent\textbf{Object Hallucination Evaluation.}
Object hallucination refers to generating objects not grounded in the image. CHAIR evaluates hallucinated COCO object mentions in captions~\cite{rohrbach2018chair,lin2014coco}, while POPE probes object existence through yes/no questions and highlights the role of object-frequency and co-occurrence priors~\cite{li2023pope}. HallusionBench further tests visual-dependent and counterfactual reasoning~\cite{guan2023hallusion}, and benchmarks such as AMBER, MMHal-Bench, and M-HalDetect extend evaluation to broader factuality settings~\cite{wang2023amber,sun2024mmhal,gunjal2023mhaldetect}. We use CHAIR for long-form caption hallucination and report POPE, HallusionBench, MME, and MMMU to assess grounding and general capability.

\noindent\textbf{Hallucination Mitigation.}
Training-based methods reduce hallucination through instruction tuning, preference optimization, or correction data~\cite{liu2023lrv,xiao2024hsa,yue2024less,sun2024mmhal}. Training-free methods avoid finetuning but usually modify inference: VCD contrasts logits from clean and perturbed images~\cite{leng2024vcd}, OPERA penalizes over-trust during beam search~\cite{huang2024opera}, PAI increases image-token influence~\cite{liu2024paying}, Woodpecker verifies and corrects generated claims~\cite{yin2023woodpecker}, and Summary-Guided Decoding reduces language-prior accumulation~\cite{min2025sumgd}. In contrast, QK Product Steering edits the model once and leaves the standard decoding path unchanged.

\noindent\textbf{Steering and Mechanistic Interventions.}
Recent work localizes hallucination to activations, attention heads, and modality-specific subspaces. VTI, Activation Steering Decoding, VISTA, and Dynamic Multimodal Activation Steering intervene in hidden states or truthfulness directions~\cite{liu2025vti,su2025activation,li2025vista,yin2026dmas}. Attention-centered studies identify hallucination-relevant heads and layers, with middle layers playing an important role in visual information propagation~\cite{yang2025modular,jiang2025devils,zhu2025attentioncalibration}; ASCD steers attention scores during contrastive decoding~\cite{wang2025ascd}. Our method instead edits the query-key product itself, the bilinear operator that produces attention logits.

\noindent\textbf{Weight Editing and Grouped-Query Attention.}
Modern decoder backbones often use grouped-query attention (GQA), where multiple query heads share key/value heads to reduce KV-cache cost~\cite{ainslie2023gqa}. This sharing makes direct key-weight editing risky, since changing a shared key can unintentionally affect several query heads. Prior weight-editing methods such as Nullu reduce hallucination without inference overhead, but rely on prompt-derived hallucination subspaces~\cite{yang2025nullu}. In contrast, QK Product Steering is data-free: it edits dominant spectral modes of each head's query-key product and realizes the edit through a query-only update, preserving shared keys and standard inference.

\vspace{-0.2cm}
\section{Background and Preliminaries}
\label{sec:preliminaries}
\vspace{-0.15cm}

\paragraph{Query-key product in attention.}
Consider an attention layer with query head $h$ and its assigned key head $g$.
Let $W_{q,h},W_{k,g}\in\mathbb{R}^{r\times d}$ denote the query and key
projection matrices, where $r$ is the head dimension and $d$ is the hidden
dimension. Given hidden states $X\in\mathbb{R}^{n\times d}$,
\begin{align}
    Q_h &= XW_{q,h}^{\top},
    &
    K_g &= XW_{k,g}^{\top}.
    \label{eq:qk-projections}
\end{align}
The pre-softmax attention logits can then be written as
\begin{align}
    Q_hK_g^{\top}
    &=
    XW_{q,h}^{\top}W_{k,g}X^{\top}
     =
    XM_hX^{\top},
    \label{eq:qk-logits}
\end{align}
where the effective query-key product is
\begin{align}
    M_h
    =
    W_{q,h}^{\top}W_{k,g}
    \in \mathbb{R}^{d\times d}.
    \label{eq:qk-product}
\end{align}
Thus, $M_h$ is the operator directly controlling the attention interaction.
Since it is formed from two rank-$r$ factors, $\operatorname{rank}(M_h)\le r$,
so each head admits a low-rank spectral description.

\paragraph{Symmetric and antisymmetric decomposition.}
Any square QK product $M_h$ admits the unique decomposition
\begin{align}
    M_h &= S_h+\Omega_h,
    \label{eq:sym-skew-decomp}
    \\
    S_h &= \frac{1}{2}\left(M_h+M_h^{\top}\right),
    &
    \Omega_h &= \frac{1}{2}\left(M_h-M_h^{\top}\right),
    \label{eq:sym-skew-def}
\end{align}
where
\begin{align}
    S_h^{\top}=S_h,
    \qquad
    \Omega_h^{\top}=-\Omega_h.
    \label{eq:sym-skew-properties}
\end{align}
These two components are orthogonal under the Frobenius inner product:
\begin{align}
    \langle S_h,\Omega_h\rangle_F
    =
    \operatorname{tr}\!\left(S_h^{\top}\Omega_h\right)
    =
    0,
    \label{eq:sym-skew-orthogonal}
\end{align}
and therefore the QK-product energy decomposes additively:
\begin{align}
    \|M_h\|_F^2
    =
    \|S_h\|_F^2
    +
    \|\Omega_h\|_F^2.
    \label{eq:sym-skew-energy}
\end{align}

For token representations $x_i$ and $x_j$, the logit contribution separates as
\begin{align}
    x_i^{\top}M_hx_j
    =
    x_i^{\top}S_hx_j
    +
    x_i^{\top}\Omega_hx_j.
    \label{eq:logit-sym-skew}
\end{align}
The symmetric term is reciprocal,
\begin{align}
    x_i^{\top}S_hx_j
    =
    x_j^{\top}S_hx_i,
    \label{eq:symmetric-mutual}
\end{align}
whereas the antisymmetric term changes sign under direction reversal:
\begin{align}
    x_i^{\top}\Omega_hx_j
    =
    -
    x_j^{\top}\Omega_hx_i.
    \label{eq:skew-directional}
\end{align}
Thus, $S_h$ captures mutual content-similarity patterns, while $\Omega_h$
captures directional attention patterns.

\begin{figure*}
    \centering
    \includegraphics[width=1.0\linewidth]{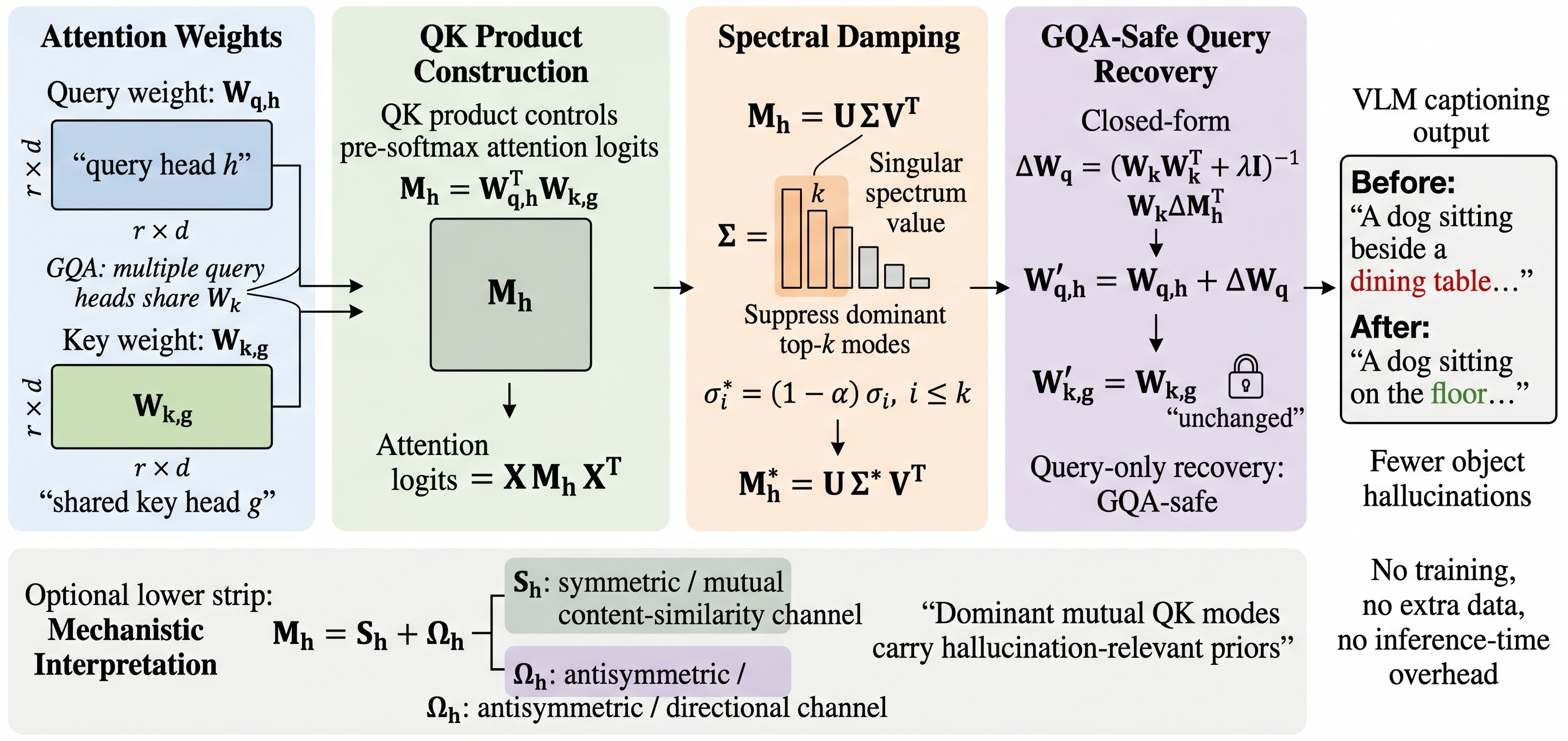}
    \vspace{-0.6cm}
    \caption{\textbf{Overview of QK Product Steering}. For each selected query head and its assigned shared key head, we form the QK product $M_h=W_{q,h}^{\top}W_{k,g}$, suppress its dominant singular modes, and recover an edited query weight through a closed-form query-only projection. The shared key weight is kept unchanged, making the edit compatible with grouped-query attention. The same framework also enables symmetric and antisymmetric QK decompositions to identify whether hallucination-relevant priors arise from mutual or directional attention channels.}
    \label{fig:main_figure}
    \vspace{-0.5cm}
\end{figure*}

\vspace{-0.2cm}
\section{Proposed Method}
\label{sec:method}
\vspace{-0.2cm}
We propose \emph{QK Product Steering}, a data-free and training-free weight edit for mitigating object hallucination in VLMs. The method is applied once to a pretrained model and then leaves inference unchanged, requiring no auxiliary model, fine-tuning, or decoding-time intervention.

\noindent \textbf{Motivation.} The core idea is to steer the query-key product, which directly controls the pre-softmax attention logits. We hypothesize that its dominant spectral directions encode strong vision-language priors: useful for fluent generation, but harmful when they override visual evidence and produce plausible yet absent objects. QK Product Steering therefore suppresses a small number of dominant modes in selected middle-layer attention heads.

Concretely, for each selected head, we compute the thin SVD of the query-key product, damp its top singular modes, and project the edited product back into the query weights through a closed-form query-only update. The key weights are kept fixed, making the edit compatible with grouped-query attention. This yields a localized spectral intervention that weakens prior-driven attention patterns while preserving most of the model's original computation; an overview is shown in Figure~\ref{fig:main_figure}.

\noindent \textbf{Thin SVD of the QK Product.}
Since $M_h \in \mathbb{R}^{d \times d}$ has rank at most $r$, we compute its thin SVD from the smaller factors. Using economy QR decompositions,
\begin{align}
    W_{q,h}^{\top} &= Q_q R_q,
    &
    W_{k,g}^{\top} &= Q_k R_k,
\end{align}
where $Q_q,Q_k \in \mathbb{R}^{d \times r}$ have orthonormal columns and $R_q,R_k \in \mathbb{R}^{r \times r}$. Then
\begin{align}
    M_h
    =
    W_{q,h}^{\top}W_{k,g}
    =
    Q_q C_h Q_k^{\top},
    \qquad
    C_h = R_qR_k^{\top}.
\end{align}
Taking the SVD $C_h=\widehat{U}\Sigma\widehat{V}^{\top}$ gives
\begin{align}
    M_h = U\Sigma V^{\top},
    \qquad
    U=Q_q\widehat{U},
    \qquad
    V=Q_k\widehat{V}.
\end{align}
Thus, we obtain the singular modes of $M_h$ from an $r \times r$ SVD instead of a $d \times d$ decomposition.

\noindent \textbf{Dominant Mode Suppression.} Let
\begin{align}
    M_h = U\Sigma V^{\top},
    \label{eq:mh-svd}
\end{align}
be the thin SVD of the QK product, where $\Sigma=\mathrm{diag}(\sigma_1,\ldots,\sigma_r)$ with $\sigma_1 \geq \cdots \geq \sigma_r$. We edit a mode set
$\mathcal{S}\subseteq\{1,\ldots,r\}$, with the default choice
$\mathcal{S}=\{1,\ldots,k\}$ corresponding to the top-$k$ modes. Given
damping strength $\alpha$, we define
\begin{align}
    \Sigma^{\star}_{ii}
    =
    \begin{cases}
    (1-\alpha)\Sigma_{ii}, & i \in \mathcal{S},\\
    \Sigma_{ii}, & i \notin \mathcal{S},
    \end{cases}
    \label{eq:sigma-damping}
\end{align}
and reconstruct the edited product as
\begin{align}
    M_h^{\star}=U\Sigma^{\star}V^{\top}.
    \label{eq:edited-product}
\end{align}
Here $\alpha=0$ gives no edit, while $\alpha=1$ fully removes the selected
modes. In our main setting, we suppress the top singular modes in middle
layers, targeting the strongest attention directions while preserving most
of the QK product structure. This produces an edited product $M_h^\star$ in the spectral space. To make the edit usable in the original VLM, we must realize $M_h^\star$ as an update to the model weights without disrupting shared key projections.

\noindent \textbf{GQA-Safe Query Recovery.} After editing $M_h$, we recover model weights while keeping the shared key
projection fixed. This avoids conflicts under grouped-query attention. Let
\begin{align}
    \Delta M_h = M_h^{\star}-M_h .
    \label{eq:delta-mh}
\end{align}
We compute the smallest query-weight update that realizes the edited product:
\begin{equation}
\begin{aligned}
    \min_{\Delta W_{q,h}}\quad
    & \|\Delta W_{q,h}\|_F^2 \\
    \text{s.t.}\quad
    & (W_{q,h}+\Delta W_{q,h})^{\top}W_{k,g}
    = M_h^{\star}.
\end{aligned}
\label{eq:query-projection-objective}
\end{equation}
The ridge-stabilized closed-form update is
\begin{align}
    \Delta W_{q,h}
    &=
    \left(
    W_{k,g}W_{k,g}^{\top}+\lambda I
    \right)^{-1}
    W_{k,g}\Delta M_h^{\top},
    \label{eq:query-update}
    \\
    \lambda
    &=
    \epsilon \cdot \frac{1}{r}
    \mathrm{tr}\!\left(W_{k,g}W_{k,g}^{\top}\right).
    \label{eq:ridge-lambda}
\end{align}
The final recovered weights are:
\begin{align}
    W_{q,h}^{\star}=W_{q,h}+\Delta W_{q,h},
    \qquad
    W_{k,g}^{\star}=W_{k,g}.
    \label{eq:final-qk-weights}
\end{align}
Thus, the edited QK product is implemented through a query-only update, while
the shared key weights remain unchanged.

\noindent \textbf{Symmetric and Antisymmetric Edit Variants.} Building on the decomposition in Section~\ref{sec:preliminaries}, we use the symmetric and antisymmetric components of $M_h$ as diagnostic edit targets. These variants provide a mechanistic view of which attention channel contributes to hallucination. Given $M_h = S_h + \Omega_h$, we define three edit variants as shown in Table~\ref{tab:sym-skew-variants}.

Each edited product $M_h^\star$ is mapped back to the query weights using the same GQA-safe query recovery step. These variants let us test whether hallucination is mainly associated with mutual content-similarity patterns in $S_h$ or directional attention patterns in $\Omega_h$ for mechanistic interpretability.

    

\begin{wraptable}{l}{0.52\columnwidth}
    \centering
    \vspace{-0.3cm}
    \scriptsize
    \setlength{\tabcolsep}{3pt}
    \renewcommand{\arraystretch}{1.08}
    \begin{tabularx}{0.50\columnwidth}{lXX}
    \toprule
    \textbf{Variant} & \textbf{Edit} & \textbf{Product} \\
    \midrule
    Sym-only 
    & Damp top-$k_s$ eigenmodes of $S_h$ 
    & $M_h^\star = S_h^\star + \Omega_h$ \\
    
    Antisym-only
    & Damp top-$k_a$ singular modes of $\Omega_h$
    & $M_h^\star = S_h + \Omega_h^\star$ \\
    
    Sym+Antisym
    & Edit both $S_h$ and $\Omega_h$
    & $M_h^\star = S_h^\star + \Omega_h^\star$ \\
    \bottomrule
    \end{tabularx}
    \vspace{-0.2cm}
    \caption{Symmetric and antisymmetric edit variants. Sym-only edits the mutual content-similarity channel $S_h$, Antisym-only edits the directional channel $\Omega_h$, and Sym+Antisym edits both.}
    \label{tab:sym-skew-variants}
    \vspace{-0.5cm}
\end{wraptable}

\noindent \textbf{Layer and Head Selection.} We apply QK Product Steering to selected layers and heads. Empirically, middle layers provide the best intervention point: early-layer edits can disrupt basic visual-textual representations, while late-layer edits have limited effect on object hallucination. Middle layers are therefore well positioned to weaken hallucination-related attention patterns while preserving caption quality. For each selected layer, we edit every query head using its assigned shared key head under grouped-query attention. The shared key weights are kept fixed throughout, and only the corresponding query weights are updated. A detailed discussion is provided in Section~\ref{sec:experiment_and_results}.

\noindent \textbf{Interpretation and Theoretical Insights.}
\emph{QK Product Steering} is motivated by the structure of attention itself. The QK product is the exact bilinear operator that produces pre-softmax attention logits, so editing it directly steers attention behavior. Its top singular modes correspond to the highest-energy low-rank directions of this operator, making their suppression a principled way to weaken dominant attention patterns rather than applying an arbitrary weight perturbation. In VLMs, these dominant directions can behave like generic vision-language priors: useful for fluent generation, but harmful when they override visual evidence and produce unsupported objects. By suppressing only a few dominant modes in middle layers, QK Product Steering weakens such prior-driven attention while preserving most of the model's computation.

The symmetric--antisymmetric decomposition further separates mutual content-similarity patterns from directional attention patterns providing more interpretability. This provides a mechanistic way to test where hallucination-relevant priors reside. Finally, because the edit is low-rank, localized to selected middle layers, and recovered through a minimum-change query-only update, its off-target effect is bounded (more details in Appendix ~\ref{app:theory}).

\begin{figure*}[t]
  \centering
  \includegraphics[width=\textwidth]{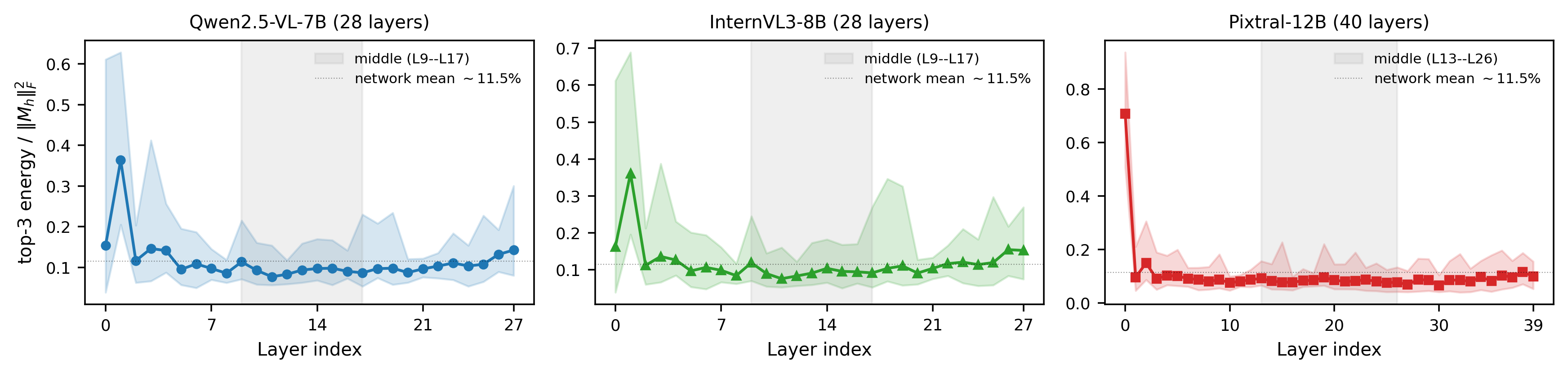}
  \vspace{-0.6cm}
  \caption{Per layer and head, the fraction of $\Mh$'s Frobenius energy in its top-$3$ singular modes, $E_3 = (\sigma_1^2 + \sigma_2^2 + \sigma_3^2)/\|\Mh\|_F^2$. Solid line: mean over query heads in each layer; shaded band: head min/max; grey vertical band: middle-third edit locus. A small $E_3$ means QK Product Steering top-$3$ touches a small share of the operator (a surgical edit); a large $E_3$ means it removes a large share (a destructive edit). Across all three models, $E_3$ is highest in the first few layers and lowest in the middle third, where the same low-rank intervention is most surgical.}
  \label{fig:top3_energy_per_layer}
  \vspace{-0.5cm}
\end{figure*}

\vspace{-0.2cm}
\section{Experiments and Results}
\label{sec:experiment_and_results}
\vspace{-0.2cm}

\textbf{Experimental Setup.} We evaluate QK Product Steering on three grouped-query-attention vision-language models: Qwen2.5-VL-7B~\cite{qwen25vl}, InternVL3-8B~\cite{internvl3}, and Pixtral-12B~\cite{pixtral2024}. Qwen2.5-VL-7B is the primary model where the full hyperparameter sweep is run; the other two are used to test cross-architecture generalization at each model's preferred $(k, \alpha)$. Per-model architecture details (layer counts, head dimensions, GQA group sizes) and the hardware configuration used to run all experiments are deferred to Appendix~\ref{sec:appendix-models-hardware}.

Our evaluation is generative object hallucination, measured with CHAIR~\cite{rohrbach2018chair} on COCO 2014 \texttt{val}; all sweeps and ablations in the paper are scored on CHAIR. POPE~\cite{li2023pope} and HallusionBench~\cite{guan2023hallusion} are reported alongside as probe-based hallucination benchmarks (templated yes/no and adversarial paired questions, respectively), and MME~\cite{fu2023mme} and MMMU val~\cite{yue2024mmmu} are reported only to verify that general capability is preserved. Our probe and capability benchmarks are run through \texttt{lmms-eval}~v0.7~\cite{zhang-etal-2025-lmms} with each model's reference inference config. Per-benchmark sample sizes, decoding settings, and scoring protocols are deferred to Appendix~\ref{sec:appendix-additional}.

Decoding is greedy throughout; hallucination deltas are reported with paired-bootstrap CIs (iteration count per experiment in Appendix~\ref{sec:statistical-significance}) and capability deltas as raw within-pipeline differences. Controls and protocol details are in Appendix ~\ref{sec:appendix-additional}.

\noindent\textbf{Why edit the middle layers? Top-$k$ surgicality across depth.}\label{sec:spectral_diag}
QK Product Steering damps the top-$k$ singular modes of $\Mh = \Wq^\top \Wk$, and the natural a priori measure of how \emph{surgical} that edit is at a given layer is the fraction of the operator's Frobenius energy carried by those modes, $E_k(\Mh) = \sum_{i\le k} \sigma_i^2 / \|\Mh\|_F^2$. A head with $E_3 = 0.10$ has $90\%$ of its operator left untouched by QK Product Steering top-$3$; a head with $E_3 = 0.36$ loses a third of the operator. We compute $E_3$ per head for every query head across all three models and use it to localise the edit before running any CHAIR experiment.

Figure~\ref{fig:top3_energy_per_layer} reveals a single, model-consistent depth pattern: $E_3$ rises sharply over the first few layers, falls to a model-internal minimum in the middle third, and creeps back up toward the output. Early layers are rank-concentrated, with $E_3$ reaching as much as $0.36$ on Qwen and $0.71$ on Pixtral, so damping their top three modes removes a substantial fraction of the operator and collapses generation. The middle band is the unique cell where the same low-rank intervention is small relative to $\|\Mh\|_F^2$ and therefore structurally consistent with a surgical edit. We adopt the middle-third locus as the default edit band on this structural basis alone, before any CHAIR experiment.

\noindent\textbf{Main Results.} Table~\ref{tab:main} reports QK Product Steering and its channel decompositions across three GQA vision-language models and six benchmarks, each at the model's preferred $(k, \alpha)$ in the middle-third edit band. The ablations in Section~\ref{sec:ablations} drive the $(k{=}3, \alpha{=}1)$ default on Qwen; for cross-architecture transfer we keep $\alpha{=}1$ and the middle-third band fixed and select $k$ per model by a small CHAIR sweep, with details in Appendix~\ref{sec:appendix-cross-arch-k}. QK Product Steering top-$k$ is the lowest-CHAIR$_s$ non-control row on every model, while the matched random-$k$ control at the same $k$ moves CHAIR$_s$ negligibly on all three -- ruling out arbitrary low-rank perturbations of $\Wq$ and identifying the dominant query-key modes as the carrier of the hallucination signal. Paired-bootstrap CIs supporting this picture, including the symmetric-channel edit on Qwen and the QK Product Steering top-$k$ edit on Pixtral, are reported in Appendix~\ref{sec:statistical-significance}. The channel decompositions separate cleanly: Sym-only TOP captures most of the CHAIR$_s$ gain on Qwen and InternVL3, whereas Antisym-only TOP is the worst non-control row on CHAIR$_s$ across all three models -- antisymmetric content carries no hallucination signal under this edit. Capability is broadly preserved within the random-control noise budget; the only systematic exception is the Pixtral middle band at $k{=}3$, where MMMU is sensitive to which spectral channel is touched (Sym best, Antisym worst). Per-method $(k, \alpha)$ sweeps and bottom-$k$ / amplification controls are reported in Appendix~\ref{sec:appendix-additional}.

\begin{table*}[t]
  \centering\small
  \setlength{\tabcolsep}{4pt}
  \resizebox{\textwidth}{!}{%
  \begin{tabular}{llcccccc}
    \toprule
       &      & \multicolumn{2}{c}{Hallucination ($\downarrow$)} & \multicolumn{2}{c}{Probe ($\uparrow$)} & \multicolumn{2}{c}{Capability ($\uparrow$)} \\
    \cmidrule(lr){3-4}\cmidrule(lr){5-6}\cmidrule(lr){7-8}
    Model & Method & CHAIR$_s$ & CHAIR$_i$ & POPE F1 & HB \textsc{aAcc} & MME total & MMMU \\
    \midrule
    \multirow{5}{*}{Qwen2.5-VL-7B}
      & Baseline                        & 31.44             & 65.26             & 86.18 & 64.30 & 2339.2             & \textbf{51.56} \\
      & QK Product Steering random $k{=}3$             & 31.33             & 64.79             & 85.33          & 64.30          & 2332.8             & 51.33 \\
      & QK Product Steering top $k{=}3$                & \textbf{28.56}    & \textbf{63.75}    & \textbf{86.68}          & \textbf{64.98}          & 2293.3             & 50.78 \\
      & Sym-only TOP $k_s{=}3$          & 29.53   & 65.75   & 86.50          & 63.68          & \textbf{2343.2}    & 50.89 \\
      & Antisym-only TOP $k_a{=}6$         & 31.60   & 66.73   & 82.87          & 61.12          & 2290.9             & 49.44 \\
    \midrule
    \multirow{5}{*}{InternVL3-8B}
      & Baseline                        & 33.81             & 62.98             & 90.85          & 60.58 & \textbf{2388.0}    & 54.33 \\
      & QK Product Steering random $k{=}1$             & 33.97             & 63.02             & 90.95          & 60.32          & 2367.0             & 54.78 \\
      & QK Product Steering top $k{=}1$                & \textbf{33.23}    & \textbf{62.98}    & 91.09          & 60.14          & 2377.2             & 54.67 \\
      & Sym-only TOP $k_s{=}1$          & 33.43             & 63.20             & \textbf{91.14} & \textbf{60.67}          & 2381.4    & \textbf{55.78} \\
      & Antisym-only TOP $k_a{=}2$         & 33.98             & 63.79             & 90.88          & 59.79          & 2371.7             & 54.89 \\
    \midrule
    \multirow{5}{*}{Pixtral-12B}
      & Baseline                        & 34.14             & 64.86             & 84.99          & 72.26          & 2009.1             & \textbf{45.44} \\
      & QK Product Steering random $k{=}3$             & 34.00             & 64.51             & 84.44          & 71.59          & 1963.2             & 45.67 \\
      & QK Product Steering top $k{=}3$                & \textbf{33.81}    & \textbf{63.04}    & \textbf{85.23} & \textbf{72.93} & \textbf{2018.6}    & 43.11 \\
      & Sym-only TOP $k_s{=}3$          & 34.10             & 64.70             & 85.21          & 72.71          & 1975.2             & 44.11 \\
      & Antisym-only TOP $k_a{=}6$         & 35.09             & 64.50             & 84.24          & 71.38          & 1995.4             & 42.46 \\
    \bottomrule
  \end{tabular}}
  \vspace{-0.2cm}
  \caption{\textbf{Main Results.} Results across three VLMs and six benchmarks. Paired-bootstrap CIs and statistical significance for CHAIR rows are reported in Appendix~\ref{sec:statistical-significance}; full benchmark configurations and evaluation details are provided in Appendix~\ref{sec:appendix-additional}.}
  \label{tab:main}
  \vspace{-0.5cm}
\end{table*}

\vspace{-0.2cm}
\section{Ablations}
\label{sec:ablations}
\vspace{-0.2cm}

The spectral diagnostic of Section~\ref{sec:spectral_diag} identifies \emph{which layers} carry the profile compatible with a small-$k$ surgical edit; it does not fix the mode count $k$, the damping strength $\alpha$, or empirically validate the layer choice. We close those gaps with three ablations on Qwen2.5-VL-7B: a layer-locus check, a $k$-sweep, and an $\alpha$-sweep. Together they fix the default configuration $(k{=}3,\alpha{=}1,\text{middle})$ used in our main results (see Table~\ref{tab:main}).

\noindent\textbf{Layer Locus.}\label{sec:layer_select}
Table~\ref{tab:layers_v2} matches the $E_3$ prediction of Figure~\ref{fig:top3_energy_per_layer}: early-band and full-stack edits collapse generation entirely (zero detectable COCO objects per caption); late-band edits preserve fluency but do not move CHAIR$_s$; only the middle band converts the surgical operator footprint into a measurable hallucination reduction.

\noindent\textbf{Mode Count.}\label{sec:ablations_k}
At fixed $\alpha{=}1$ on Qwen, the symmetric-channel top-$k$ edit peaks at $k_s{=}3$ on CHAIR$_s$ and saturates beyond it, while the antisymmetric channel is non-monotone and weak across all $k_a$ tested (Appendix Table~\ref{tab:appendix-k-sweep}). Matched-rank \emph{bottom}-$k$ controls are null on both channels (CIs include zero), confirming that the signal lives in the dominant modes of $S_h$ rather than in arbitrary low-rank perturbations or in the antisymmetric channel. On the full joint $(k, \alpha)$ grid (Appendix Section ~\ref{sec:appendix-joint-grid}), CHAIR$_s$ decreases monotonically with $k$ at $\alpha \ge 1$, but at increasing capability cost which we examine next.

\begin{wraptable}{l}{0.55\columnwidth}
  \centering
  \vspace{-0.3cm}
  \small
  \setlength{\tabcolsep}{4pt}
  \begin{tabular}{lccc}
    \toprule
    Layers          & $\Delta$CHAIR$_s$ & $\Delta$CHAIR$_i$ & avg\_obj \\
    \midrule
    early           & $-28.81$          & $-66.60$          & $0.00$ {\scriptsize (collapse)} \\
    \textbf{middle} & $\mathbf{-3.04}$  & $-2.06$           & $2.67$ \\
    late            & $+0.09$           & $-2.41$           & $2.62$ \\
    all             & $-28.81$          & $-66.60$          & $0.00$ {\scriptsize (collapse)} \\
    \bottomrule
  \end{tabular}
  \vspace{-0.2cm}
  \caption{Layer sensitivity on Qwen2.5-VL-7B at $k{=}3, \alpha{=}1$, $N{=}2000$. \texttt{avg\_obj}: mean COCO objects per caption; baseline $2.54$.}
  \label{tab:layers_v2}
  \vspace{-0.4cm}
\end{wraptable}

\noindent\textbf{Damping Strength and Capability Tradeoff.}\label{sec:ablations_alpha}
At fixed $k{=}3$ on Qwen, the Sym-only edit's CHAIR$_s$ response is monotone in $\alpha$, with CIs excluding zero from $\alpha{=}0.5$ onward (Appendix Table~\ref{tab:appendix-alpha-sweep}); Antisym-only on $\Omega$ is flat across the sweep. Pushing past zero-out into $\alpha > 1$ continues to lower CHAIR$_s$ on the joint grid, but at a measurable capability cost on MMMU and MME; the same cost appears, at smaller magnitude, when $k$ is increased at safe $\alpha{=}1$ (Appendix Table~\ref{tab:appendix-capability-tradeoff}). $\alpha$ is the larger lever on capability. We therefore report $(k{=}3,\alpha{=}1)$ as the default: the smallest principled cell where the CHAIR drop is CI-significant and capability is preserved within the random-control noise budget.

\vspace{-0.2cm}
\section{Discussion and Analysis}
\label{sec:mechanism}
\vspace{-0.2cm}

QK Product Steering acts primarily on the symmetric part of the per-head QK product. On Qwen and InternVL3, damping the dominant eigenmodes of $S_h$ alone reproduces roughly two thirds of the CHAIR$_s$ reduction obtained by damping the full product; on Pixtral the symmetric contribution is smaller. Editing only the antisymmetric part $\Omega_h$ does \emph{not} reduce hallucination and, across all three models, degrades probe and capability metrics relative to baseline (Table~\ref{tab:main}). Antisymmetric content therefore appears to carry no hallucination signal but still couples to general task performance, so editing it costs capability without any hallucination return. The behavioural consequence of that damping is non-local: at the edited layers themselves the layer-mean attention shift is essentially zero -- per-head reorganisations cancel within the layer -- and the change surfaces only in the unedited layers downstream of the edit site, as a small, uniform tilt of attention probability toward image tokens (Section~\ref{sec:trajectory}). That per-step tilt is amplified across the roughly two-hundred decoding steps of a free-form caption into a CHAIR-level effect, but is too small to flip a single short answer on MME or MMMU, which is why hallucination drops while capability is preserved (Section~\ref{sec:freq}).

Two natural null hypotheses fail under teacher forcing: the edit neither selectively suppresses hallucinated-noun log-probabilities at fixed prefix, nor re-grounds attention at the pre-noun position (Appendix~\ref{sec:appendix-nulls}). The mechanism is neither token- nor position-local.

\noindent\textbf{Editing the middle layers shifts attention to image tokens in the \emph{later} layers.}\label{sec:trajectory}
QK Product Steering's effect on attention has two structural features that make the discussion follow: it is \emph{small} per layer and \emph{non-local} in depth. The layer-mean image-attention shift at the edited layers is essentially zero; the signal appears as a uniform tilt toward image tokens in every unedited downstream layer, around $0.4$pp per layer. Both features are essential: the locus tells us \emph{where} the mechanism lives (downstream of the edit, not at the edit), and the magnitude tells us \emph{which benchmarks} can resolve it (long-horizon captioning yes, single-decision MCQ no). We measure the shift under teacher forcing -- identical $[\text{prompt} + \text{baseline caption}]$ sequence through both edited and unedited models, so any attention difference is attributable to the weight edit alone.

Upstream of the edit ($L_0$--$L_8$) the shift is bit-exactly zero -- a protocol check, since unedited layers with identical input must produce identical attention. Inside the edited band ($L_9$--$L_{17}$) QK Product Steering's layer-mean is effectively zero ($+0.05$pp, sign-mixed). The signal lives in the downstream unedited layers ($L_{18}$--$L_{27}$): every one of them shifts in the same direction, with mean $+0.4$pp. Figure~\ref{fig:downstream_dv} resolves this layer-by-layer: the largest shifts are at $L_{19}$--$L_{20}$ immediately after the edit, taper through $L_{22}$, and hold a smaller positive offset to $L_{27}$ -- the decay shape of a perturbation that enters at $L_{17}$ and propagates through subsequent attention and MLP blocks. Sym-only shows the same downstream pattern at comparable magnitude; Antisym-only, which is inert on CHAIR, has no such downstream signal (Appendix~\ref{sec:appendix-attn-trajectory}, Figure~\ref{fig:attention_delta}).


The zero layer-mean inside the edited band reflects per-head cancellation rather than absence of effect: per-head shifts reach $\pm 0.48$pp with $138$ positive and $114$ negative cells across the $252$ (layer, head) cells in the edited band, summing to a near-zero layer mean but a non-zero residual stream that the downstream layers respond to (see Appendix Section ~\ref{sec:appendix-attn-trajectory}, Figure~\ref{fig:per_head_dv}).

The per-layer magnitude of $\Delta v$ is small by design of the mechanism, not by limitation of the edit. A shift of this size cannot flip a single short answer on MME or MMMU; applied at every one of the $\sim 200$ greedy decoding steps of a free-form caption, it accumulates into the CHAIR-level effect we observe. The next subsection makes that asymmetry concrete.

\begin{wrapfigure}{r}{0.52\columnwidth}
  \centering
  \vspace{-0.3cm}

  \begin{subfigure}[b]{0.5\columnwidth}
    \centering
    \includegraphics[width=\linewidth]{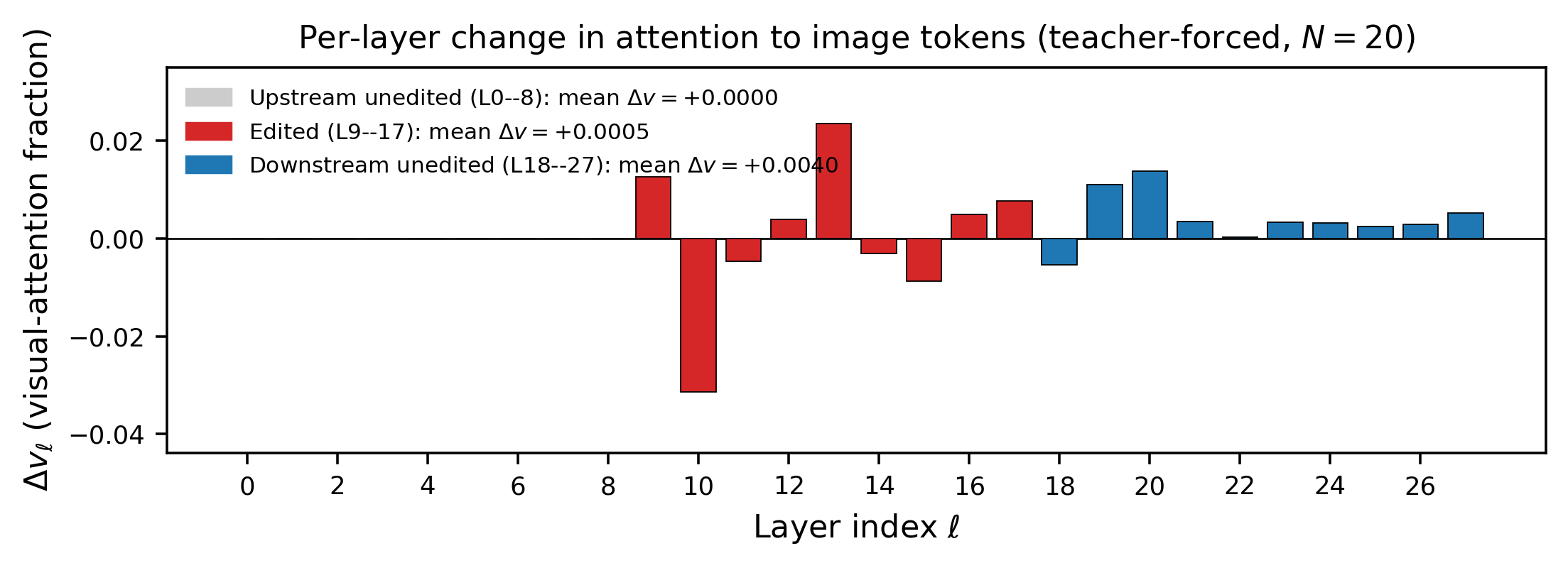}
    \caption{QK Product Steering top-$3$, $N{=}200$.}
    \label{fig:attention_delta_body}
  \end{subfigure}

  \vspace{0.1cm}

  \begin{subfigure}[b]{0.5\columnwidth}
    \centering
    \includegraphics[width=\linewidth]{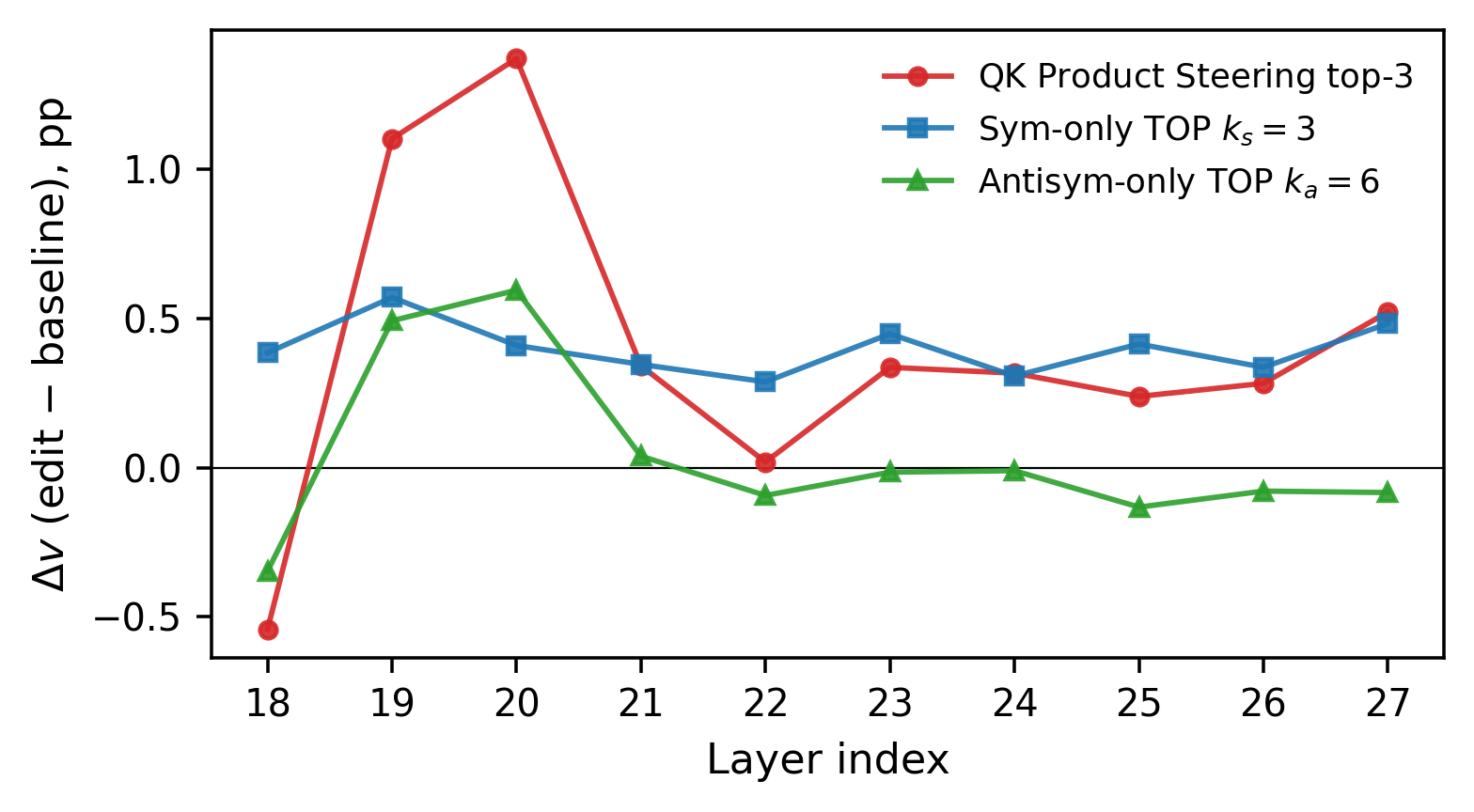}
    \caption{Three spectral methods, $N{=}200$.}
    \label{fig:downstream_dv}
  \end{subfigure}

  \vspace{-0.2cm}
  \caption{Per-layer attention-to-image shift $\Delta v$ (edit $-$ baseline, percentage points) under teacher forcing. (\textbf{Top}) (a) Full 28-layer profile for QK Product Steering: upstream is bit-exactly zero, edited band cancels in the layer mean, downstream is uniformly positive. (\textbf{Bottom}) (b) Downstream zoom across the three spectral methods: QK Product Steering and Sym-only both lift attention to image tokens (mean $+0.4$pp); Antisym-only is flat.}
  \label{fig:attention_combined}
  \vspace{-0.5cm}
\end{wrapfigure}

\noindent\textbf{Why the attention shift reduces CHAIR but not MME or MMMU.}\label{sec:freq}
The per-layer image-attention shift of Section~\ref{sec:trajectory} is small and applies to every forward pass equally, yet it produces a clean CHAIR$_s$ reduction on free-form captioning while leaving MME and MMMU val essentially flat. Two factors account for the asymmetry. The first is decoding length: CHAIR scores captions of $\sim 200$ greedy steps, so a small per-step bias toward image-grounded continuations accumulates over the trajectory and concentrates at the positions where the baseline model was borderline between a hallucinated and a grounded noun. MME and MMMU score a single short answer dominated by the recognition head over a handful of candidate tokens; there is no long generation to integrate over. The second is which spectral subspace QK Product Steering damps: the top modes of $\Mh$ are the high-energy, generic content-similarity defaults the model leans on when image evidence is weak -- the same defaults that drive hallucination on free-form captioning. Damping them forces decoding toward image-specific signal without touching the lower-energy modes that the recognition-style benchmarks reward, so the cost surfaces as a free-form hallucination drop with capability preserved.

\noindent\textbf{Qualitative Examples.} Appendix Section ~\ref{sec:appendix-qual} provides qualitative COCO examples showing baseline and edited captions side by side for comparison. Across the full validation set, QK Product Steering top-$3$ meets a strict improvement criterion on $756$ images: it removes at least one hallucinated object, introduces no new hallucinations, and preserves all grounded object mentions.

\vspace{-0.3cm}
\section{Conclusion}
\label{sec:conclusion}
\vspace{-0.3cm}
We introduced \emph{QK Product Steering}, a training-free and zero-inference-cost method for reducing object hallucination in VLMs through query-key product editing. By suppressing a few dominant middle-layer modes, it weakens prior-driven attention while keeping architecture and inference unchanged. Our results show that these modes are closely linked to hallucination, making QK Product Steering a practical alternative to decoding-time interventions.

\vspace{-0.3cm}
\section{Limitations and Future Work}
\label{sec:limitations}
\vspace{-0.3cm}
\emph{QK Product Steering} is a lightweight post-hoc edit with a focused scope. It targets object hallucination in free-form VLM generation by weakening dominant query-key attention modes, but it does not add new visual knowledge or improve the model's underlying perception. Therefore, failures caused by poor object recognition, ambiguous images, or missing visual evidence may remain.

The method also uses a small set of hyperparameters, including the edited layers, number of modes, and damping strength. Although we find that middle-layer dominant modes work well across multiple VLMs, the optimal configuration may vary across architectures. Finally, our edit operates on the content-based QK product and does not explicitly model position-dependent effects such as rotary positional embeddings. Extending the method to adaptive and position-aware steering is an important direction for future work.

\bibliography{colm2026_conference}
\bibliographystyle{colm2026_conference}

\etocdepthtag.toc{mtmain}

\appendix
\etocdepthtag.toc{mtappendix}

\etocsettagdepth{mtmain}{none}
\etocsettagdepth{mtappendix}{subsection}

\etocsettocstyle{\section*{Appendix Contents}\vspace{0.5em}}{}

\onecolumn
\newpage
\tableofcontents
\newpage

\section{Notation and Conventions}
\label{app:notation}

Table~\ref{tab:notation} summarizes the main notation used throughout the paper. We use $h$ to index query heads and $g$ to index the corresponding shared key head under grouped-query attention. The central object in our method is the per-head QK product $M_h=W_{q,h}^{\top}W_{k,g}$, which directly determines the pre-softmax attention logits. QK Product Steering edits this product spectrally, maps the edited product back to the query weights, and keeps the shared key weights fixed.

\begin{table}[h]
    \centering
    \small
    \setlength{\tabcolsep}{4pt}
    \renewcommand{\arraystretch}{1.08}
    \begin{tabular}{ll}
    \toprule
    \textbf{Notation} & \textbf{Description} \\
    \midrule
    $X \in \mathbb{R}^{n \times d}$ 
    & Hidden states with sequence length $n$ and dimension $d$ \\
    
    $r$ 
    & Attention head dimension \\
    
    $d$ 
    & Model hidden dimension \\
    
    $h$ 
    & Query head index \\
    
    $g$ 
    & Shared key head index assigned to query head $h$ \\
    
    $W_{q,h} \in \mathbb{R}^{r \times d}$ 
    & Query projection for head $h$ \\
    
    $W_{k,g} \in \mathbb{R}^{r \times d}$ 
    & Key projection for shared key head $g$ \\
    
    $Q_h = XW_{q,h}^{\top}$ 
    & Query representation for head $h$ \\
    
    $K_g = XW_{k,g}^{\top}$ 
    & Key representation for head $g$ \\
    
    $M_h = W_{q,h}^{\top}W_{k,g}$ 
    & Per-head QK product controlling attention logits \\
    
    $S_h$ 
    & Symmetric component of $M_h$; mutual attention channel \\
    
    $\Omega_h$ 
    & Antisymmetric component of $M_h$; directional attention channel \\
    
    $U\Sigma V^{\top}$ 
    & Thin SVD of the QK product $M_h$ \\
    
    $\sigma_i$ 
    & $i$-th singular value of $M_h$ \\
    
    $\mathcal{S}$ 
    & Set of selected modes to edit \\
    
    $k$ 
    & Number of singular modes selected for suppression \\
    
    $\alpha$ 
    & Damping strength for selected modes \\
    
    $M_h^{\star}$ 
    & Edited QK product after spectral damping \\
    
    $\Delta M_h = M_h^{\star}-M_h$ 
    & Desired change in the QK product \\
    
    $\Delta W_{q,h}$ 
    & Query-weight update used to realize $M_h^{\star}$ \\
    
    $\lambda$ 
    & Ridge coefficient for stable query recovery \\
    
    $W_{q,h}^{\star}$ 
    & Edited query projection \\
    
    $W_{k,g}^{\star}$ 
    & Key projection after editing; kept unchanged \\
    \bottomrule
    \end{tabular}
    \caption{Summary of notation used in QK Product Steering.}
    \label{tab:notation}
\end{table}

\vspace{-0.2cm}
\section{Theoretical Insights}
\label{app:theory}
\vspace{-0.2cm}

This section provides theoretical support for QK Product Steering. Our goal is not to prove universal hallucination removal, but to justify why editing the query-key product is a direct, targeted, and localized intervention. We show that the QK product is the exact attention-logit operator, that top singular modes are principled edit targets, that symmetric and antisymmetric components form orthogonal attention channels, and that the query-only recovery is a minimum-change GQA-safe realization of the edited product.

\subsection{QK Product as the Attention-Logit Operator}
\label{app:qk-operator}

For query head $h$ and its assigned key head $g$, let
\begin{align}
    Q_h = XW_{q,h}^{\top},
    \qquad
    K_g = XW_{k,g}^{\top},
\end{align}
where $X \in \mathbb{R}^{n \times d}$ is the hidden-state matrix. The pre-softmax attention logits are
\begin{align}
    Q_hK_g^{\top}
    =
    XW_{q,h}^{\top}W_{k,g}X^{\top}
    =
    XM_hX^{\top},
    \qquad
    M_h = W_{q,h}^{\top}W_{k,g}.
    \label{eq:app-qk-logit-operator}
\end{align}
Thus, $M_h$ is the content-based bilinear operator controlling the query-key interaction for head $h$; standard scaling and position-dependent transformations such as RoPE act around this product. Editing $M_h$ therefore directly modifies the attention interaction, rather than intervening only at the decoding logits or output probabilities.

\subsection{Why Top Singular Modes Are Natural Edit Targets}
\label{app:top-modes}

Let the thin SVD of the QK product be
\begin{align}
    M_h = \sum_{i=1}^{r}\sigma_i u_i v_i^{\top},
    \qquad
    \sigma_1 \geq \sigma_2 \geq \cdots \geq \sigma_r .
\end{align}
The top-$k$ component is
\begin{align}
    M_{h,k}
    =
    \sum_{i=1}^{k}\sigma_i u_i v_i^{\top}.
\end{align}

\begin{proposition}[Dominant QK modes are maximum-energy directions]
The matrix $M_{h,k}$ is the best rank-$k$ approximation of $M_h$ in Frobenius norm:
\begin{align}
    M_{h,k}
    =
    \arg\min_{\mathrm{rank}(A)\leq k}
    \|M_h-A\|_F .
\end{align}
Moreover,
\begin{align}
    \|M_{h,k}\|_F^2
    =
    \sum_{i=1}^{k}\sigma_i^2 .
\end{align}
\end{proposition}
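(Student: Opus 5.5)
This is the Eckart--Young--Mirsky theorem specialised to $M_h$, and we give a short self-contained argument. We begin with the energy identity, since it is purely computational. The singular vectors $\{u_i\}$ and $\{v_i\}$ are orthonormal, so the rank-one terms $u_iv_i^{\top}$ are Frobenius-orthonormal:
\[
\langle u_iv_i^{\top},u_jv_j^{\top}\rangle_F=(u_i^{\top}u_j)(v_j^{\top}v_i)=\delta_{ij}.
\]
Expanding $\|M_{h,k}\|_F^2$ then gives $\sum_{i\le k}\sigma_i^2$. The same computation gives $\|M_h\|_F^2=\sum_{i\le r}\sigma_i^2$ and $\|M_h-M_{h,k}\|_F^2=\sum_{i>k}\sigma_i^2$. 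The optimality claim therefore reduces to showing that every $A$ with $\mathrm{rank}(A)\le k$ satisfies
\[
\|M_h-A\|_F^2\ \ge\ \sum_{i>k}\sigma_i^2 .
\]

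For the lower bound we use Weyl's inequality for singular values, $\sigma_{i+j-1}(B+C)\le\sigma_i(B)+\sigma_j(C)$. Apply it with $B=M_h-A$, $C=A$ and $j=k+1$. Since $\sigma_{k+1}(A)=0$, this yields
\[
\sigma_i(M_h-A)\ \ge\ \sigma_{i+k}(M_h)\quad\text{for every } i .
\]
Summing the squares over $i$ gives $\|M_h-A\|_F^2\ge\sum_{i\ge1}\sigma_{i+k}(M_h)^2=\sum_{i>k}\sigma_i^2$, and $M_{h,k}$ attains this bound.

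The main obstacle is Weyl's inequality itself, if we cannot simply cite it. We prove it via the Courant--Fischer characterisation
\[
\sigma_m(B)=\min_{\dim W\le m-1}\ \max_{x\perp W,\ \|x\|=1}\|Bx\|,
\]
and take $W$ to be the span of the top $i-1$ right singular vectors of $B$ together with the top $j-1$ right singular vectors of $C$. Then $\dim W\le i+j-2$. For any unit $x\perp W$ we have $\|(B+C)x\|\le\|Bx\|+\|Cx\|\le\sigma_i(B)+\sigma_j(C)$, which is the inequality with $m=i+j-1$.

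A simpler route for the special case $j=k+1$ avoids Weyl altogether. Let $N=\ker A$, so $\dim N\ge d-k$. Let $P$ be the orthogonal projector onto $N$, which satisfies $AP=0$. Then
\[
\|M_h-A\|_F^2\ \ge\ \|(M_h-A)P\|_F^2=\|M_hP\|_F^2 .
\]
Now $\|M_hP\|_F^2=\mathrm{tr}(PM_h^{\top}M_hP)$ is a trace of the PSD matrix $M_h^{\top}M_h$ compressed to a subspace of dimension at least $d-k$. By Ky Fan's minimum principle this trace is at least the sum of the $d-k$ smallest eigenvalues of $M_h^{\top}M_h$, which equals $\sum_{i>k}\sigma_i^2$ (eigenvalues beyond rank $r$ are zero). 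This route needs only the Ky Fan trace principle, and we will use it as the primary argument.

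Uniqueness of the minimiser, implicit in writing $\arg\min$, holds only when $\sigma_k>\sigma_{k+1}$. We will state this caveat rather than prove uniqueness in general.
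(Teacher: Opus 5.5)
Your proof is correct. It differs from the paper's in that you prove what the paper only cites. The paper settles optimality in one line by invoking Eckart--Young--Mirsky and adds that the truncated SVD carries energy $\sum_{i\le k}\sigma_i^2$. You verify the energy identity from the Frobenius-orthonormality of the terms $u_iv_i^{\top}$, which is the same content as the paper's second sentence. You then derive the lower bound $\|M_h-A\|_F^2\ge\sum_{i>k}\sigma_i^2$ yourself.

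Your primary route gives the shortest self-contained argument for the Frobenius case. It restricts to $\ker A$ via $P$, uses $AP=0$ and $\|X\|_F\ge\|XP\|_F$, and applies Ky Fan's minimum principle to $\operatorname{tr}(PM_h^{\top}M_hP)$. Positive semidefiniteness is what lets you pass from $\dim\ker A\ge d-k$ to the $d-k$ smallest eigenvalues, and you correctly flag it.

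Your Weyl route buys more. The entrywise bound $\sigma_i(M_h-A)\ge\sigma_{i+k}(M_h)$ gives optimality of $M_{h,k}$ in every unitarily invariant norm, spectral norm included, which is the full Mirsky statement; the Ky Fan route is tailored to Frobenius.

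Your bound also recovers the paper's closing remark that the top-$k$ modes carry the most rank-$k$ energy. For any rank-$k$ orthogonal projector $P$,
\[
\|PM_h\|_F^2=\|M_h\|_F^2-\|M_h-PM_h\|_F^2\le\sum_{i\le k}\sigma_i^2 .
\]

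The paper's citation is adequate for a textbook fact. What your version adds is transparency about the ingredients and an honest treatment of the $\arg\min$ notation, which the paper passes over.

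One small correction to your caveat. The minimiser is unique if and only if $\sigma_k>\sigma_{k+1}$ \emph{or} $\sigma_{k+1}=0$. When $\operatorname{rank}M_h<k$, so $\sigma_k=\sigma_{k+1}=0$, the unique minimiser is $A=M_h=M_{h,k}$. Hence ``only when $\sigma_k>\sigma_{k+1}$'' is slightly too strong.
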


\begin{proof}
This follows directly from the Eckart--Young--Mirsky theorem. The truncated SVD gives the optimal rank-$k$ approximation in Frobenius norm, and the Frobenius energy of an SVD component is the sum of squared singular values. Since the singular values are sorted in decreasing order, the top-$k$ modes carry the largest rank-$k$ spectral energy.
\end{proof}

QK Product Steering suppresses this dominant component:
\begin{align}
    M_h^{\star}
    =
    M_h-\alpha M_{h,k}
    =
    M_h-\alpha
    \sum_{i=1}^{k}\sigma_i u_i v_i^{\top}.
    \label{eq:app-topk-suppression}
\end{align}
Hence, the edit is not an arbitrary perturbation: it targets the strongest low-rank directions of the actual attention-logit operator.

\subsection{Symmetric and Antisymmetric Attention Channels}
\label{app:sym-skew-theory}

Every QK product admits the unique decomposition
\begin{align}
    M_h = S_h+\Omega_h,
    \qquad
    S_h=\frac{1}{2}(M_h+M_h^{\top}),
    \qquad
    \Omega_h=\frac{1}{2}(M_h-M_h^{\top}),
\end{align}
where $S_h^{\top}=S_h$ and $\Omega_h^{\top}=-\Omega_h$.

\begin{proposition}[Orthogonal mutual and directional channels]
The symmetric and antisymmetric components are orthogonal under the Frobenius inner product:
\begin{align}
    \langle S_h,\Omega_h\rangle_F
    =
    \mathrm{tr}(S_h^{\top}\Omega_h)
    =
    0.
\end{align}
Consequently,
\begin{align}
    \|M_h\|_F^2
    =
    \|S_h\|_F^2+\|\Omega_h\|_F^2.
\end{align}
\end{proposition}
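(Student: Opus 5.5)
The approach is a direct trace computation, using only the defining symmetry properties $S_h^{\top}=S_h$ and $\Omega_h^{\top}=-\Omega_h$ together with cyclic and transpose invariance of the trace. No spectral information about $M_h$ is needed.

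First, I will show that $\langle S_h,\Omega_h\rangle_F=0$. Write $t=\mathrm{tr}(S_h^{\top}\Omega_h)=\mathrm{tr}(S_h\Omega_h)$, using symmetry of $S_h$. Since the trace is invariant under transposition,
\begin{align}
    t=\mathrm{tr}\big((S_h\Omega_h)^{\top}\big)=\mathrm{tr}(\Omega_h^{\top}S_h^{\top})=-\mathrm{tr}(\Omega_h S_h)=-\mathrm{tr}(S_h\Omega_h)=-t,
\end{align}
where the last two steps use antisymmetry of $\Omega_h$ and then cyclicity. Hence $t=0$. The same argument shows more generally that the symmetric and skew-symmetric subspaces of $\mathbb{R}^{d\times d}$ are Frobenius-orthogonal; I would remark on this in one line.

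Second, I will derive the energy identity. Expand
\begin{align}
    \|M_h\|_F^2=\langle S_h+\Omega_h,S_h+\Omega_h\rangle_F=\|S_h\|_F^2+2\langle S_h,\Omega_h\rangle_F+\|\Omega_h\|_F^2,
\end{align}
using bilinearity and symmetry of the real Frobenius inner product. By the first step the cross term vanishes, which gives the Pythagorean identity.

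There is no real obstacle here. The only point requiring care is the sign bookkeeping in the transpose step, namely applying antisymmetry exactly once. I would also note that uniqueness of the decomposition, already asserted in the paper, is not needed for this proposition.
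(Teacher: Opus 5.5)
Your proof is correct and matches the paper's argument essentially step for step. The paper also rewrites $\mathrm{tr}(S_h^{\top}\Omega_h)$ as $\mathrm{tr}(S_h\Omega_h)$ and uses transpose invariance, antisymmetry and cyclicity to show it equals its own negative. It then expands $\|S_h+\Omega_h\|_F^2$ so that the cross term $2\langle S_h,\Omega_h\rangle_F$ vanishes.
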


\begin{proof}
Since $S_h^{\top}=S_h$ and $\Omega_h^{\top}=-\Omega_h$,
\begin{align}
    \mathrm{tr}(S_h^{\top}\Omega_h)
    =
    \mathrm{tr}(S_h\Omega_h).
\end{align}
Using trace invariance under transpose,
\begin{align}
    \mathrm{tr}(S_h\Omega_h)
    =
    \mathrm{tr}((S_h\Omega_h)^{\top})
    =
    \mathrm{tr}(\Omega_h^{\top}S_h^{\top})
    =
    \mathrm{tr}(-\Omega_h S_h)
    =
    -\mathrm{tr}(S_h\Omega_h).
\end{align}
Therefore, $\mathrm{tr}(S_h\Omega_h)=0$. The additive energy decomposition follows from orthogonality:
\begin{align}
    \|M_h\|_F^2
    =
    \|S_h+\Omega_h\|_F^2
    =
    \|S_h\|_F^2+\|\Omega_h\|_F^2
    +
    2\langle S_h,\Omega_h\rangle_F .
\end{align}
\end{proof}

For token representations $x_i$ and $x_j$, the attention-logit contribution decomposes as
\begin{align}
    x_i^{\top}M_hx_j
    =
    x_i^{\top}S_hx_j
    +
    x_i^{\top}\Omega_hx_j.
\end{align}
The symmetric term is reciprocal:
\begin{align}
    x_i^{\top}S_hx_j
    =
    x_j^{\top}S_hx_i,
\end{align}
whereas the antisymmetric term changes sign under reversal:
\begin{align}
    x_i^{\top}\Omega_hx_j
    =
    -
    x_j^{\top}\Omega_hx_i.
\end{align}
Thus, $S_h$ captures mutual content-similarity patterns, while $\Omega_h$ captures directional attention patterns. This justifies Sym-only and Antisym-only edits as mechanistic probes of whether hallucination is associated with reciprocal content priors or directional attention effects.

\subsection{GQA-Safe Query Recovery as a Minimum-Change Edit}
\label{app:query-recovery}

After obtaining an edited product $M_h^{\star}$, we must realize it in the model weights. Under grouped-query attention, the key projection $W_{k,g}$ is shared across multiple query heads, so modifying it can unintentionally affect other heads. We therefore keep $W_{k,g}$ fixed and update only $W_{q,h}$.

Let
\begin{align}
    \Delta M_h = M_h^{\star}-M_h .
\end{align}
We seek the smallest query-weight update that realizes the edited product:
\begin{equation}
\begin{aligned}
    \min_{\Delta W_{q,h}}\quad
    &\|\Delta W_{q,h}\|_F^2\\
    \text{s.t.}\quad
    &(W_{q,h}+\Delta W_{q,h})^{\top}W_{k,g}
    =
    M_h^{\star}.
\end{aligned}
\label{eq:app-query-min}
\end{equation}
Equivalently,
\begin{align}
    \Delta W_{q,h}^{\top}W_{k,g}
    =
    \Delta M_h.
\end{align}
Taking transposes gives
\begin{align}
    W_{k,g}^{\top}\Delta W_{q,h}
    =
    \Delta M_h^{\top}.
\end{align}

\begin{proposition}[Minimum-change query recovery]
If $W_{k,g}$ has full row rank, the minimum-Frobenius-norm solution of Eq.~\eqref{eq:app-query-min} is
\begin{align}
    \Delta W_{q,h}
    =
    (W_{k,g}W_{k,g}^{\top})^{-1}
    W_{k,g}\Delta M_h^{\top}.
\end{align}
\end{proposition}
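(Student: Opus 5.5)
The plan is to treat the constraint $W_{k,g}^{\top}\Delta W_{q,h}=\Delta M_h^{\top}$ as a linear system in the unknown $\Delta W_{q,h}\in\mathbb{R}^{r\times d}$ and solve it column by column. Write $A=W_{k,g}^{\top}\in\mathbb{R}^{d\times r}$. Full row rank of $W_{k,g}$ means $A$ has full column rank $r$, so $A^{\top}A=W_{k,g}W_{k,g}^{\top}$ is invertible. The Frobenius objective separates over columns, $\|\Delta W_{q,h}\|_F^2=\sum_j\|w_j\|_2^2$, and the constraint separates as $Aw_j=b_j$, where $b_j$ is the $j$-th column of $\Delta M_h^{\top}$. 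So it suffices to solve each vector least-norm problem and stack the solutions.

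\textbf{Feasibility.} Before minimizing, I will check that the system is consistent. Since $A$ is $d\times r$ with $r<d$ and full column rank, the system $Aw=b$ has at most one solution, and it has one only when $b\in\operatorname{range}(A)$. Because of this, the correct reading is that the constraint pins $\Delta W_{q,h}$ down uniquely, rather than ``minimum norm among many''.

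\textbf{Range condition.} I will show that $\Delta M_h^{\top}$ has columns in $\operatorname{range}(W_{k,g}^{\top})$, i.e.\ that the row space of $\Delta M_h$ lies in the row space of $W_{k,g}$.
\begin{itemize}
\item For the top-$k$ edit, $\Delta M_h=-\alpha\sum_{i\le k}\sigma_i u_i v_i^{\top}$. From the thin SVD, $V=Q_k\widehat V$, so each $v_i\in\operatorname{range}(Q_k)=\operatorname{range}(W_{k,g}^{\top})$, and the condition holds.
\item For the symmetric and antisymmetric variants, $M_h^{\top}$ brings in $W_{q,h}$ directions, so the condition can fail. In that case the statement should be read as the least-squares/minimum-norm solution of the normal equations. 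I will note this explicitly.
\end{itemize}

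\textbf{Explicit solution.} When the system is consistent, multiply $A\Delta W_{q,h}=\Delta M_h^{\top}$ on the left by $A^{\top}$ to get $A^{\top}A\,\Delta W_{q,h}=A^{\top}\Delta M_h^{\top}$. Invert $A^{\top}A$ to obtain
$$\Delta W_{q,h}=(W_{k,g}W_{k,g}^{\top})^{-1}W_{k,g}\Delta M_h^{\top}.$$
Since $A$ has full column rank, this solution is unique and therefore trivially minimum-norm.

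\textbf{Alternative route.} Equivalently, a Lagrangian argument gives the stationarity condition $\Delta W_{q,h}=W_{k,g}\Lambda$. So the optimum lies in $\operatorname{range}(W_{k,g})=\mathbb{R}^r$, which is all of $\mathbb{R}^r$ because $W_{k,g}$ has rank $r$. This is consistent with uniqueness. The expression is also the pseudoinverse solution $A^{+}\Delta M_h^{\top}$, with $A^{+}=(A^{\top}A)^{-1}A^{\top}$, which is the minimum-norm least-squares minimizer in general.

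\textbf{Main obstacle.} The hard step is the consistency/range argument, because the inverse formula itself is routine. I will attempt it first via the $Q_k\widehat V$ representation. Where consistency fails, I will fall back on the pseudoinverse interpretation. Finally, the ridge version in Eq.~\eqref{eq:query-update} recovers this formula in the limit $\lambda\to0$.
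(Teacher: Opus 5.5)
Your proposal is correct and uses the same core step as the paper's proof: with $A=W_{k,g}^{\top}$ of full column rank, $A^{\dagger}=(A^{\top}A)^{-1}A^{\top}=(W_{k,g}W_{k,g}^{\top})^{-1}W_{k,g}$, and applying it to $\Delta M_h^{\top}$ gives the stated update. Your additions fill in what the paper's proof glosses over, since it invokes the pseudoinverse as ``the minimum-norm solution'' without checking that the constraint is satisfiable, and only asserts feasibility in a later remark. Specifically, you check consistency via $V=Q_k\widehat{V}$, so that the columns of $\Delta M_h^{\top}$ lie in $\operatorname{range}(W_{k,g}^{\top})$, and you note that full column rank makes the feasible solution unique, so minimality is automatic.
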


\begin{proof}
The transposed constraint has the form
\begin{align}
    A\Delta W_{q,h}=B,
    \qquad
    A=W_{k,g}^{\top},
    \qquad
    B=\Delta M_h^{\top}.
\end{align}
The minimum-norm solution of a linear system is given by the Moore--Penrose pseudoinverse:
\begin{align}
    \Delta W_{q,h}
    =
    A^{\dagger}B.
\end{align}
Since $A=W_{k,g}^{\top}$ and $W_{k,g}$ has full row rank,
\begin{align}
    A^{\dagger}
    =
    (A^{\top}A)^{-1}A^{\top}
    =
    (W_{k,g}W_{k,g}^{\top})^{-1}W_{k,g}.
\end{align}
Substituting gives the stated update. The ridge version replaces the inverse with a numerically stable regularized inverse.
\end{proof}

The final recovered weights are
\begin{align}
    W_{q,h}^{\star}
    =
    W_{q,h}+\Delta W_{q,h},
    \qquad
    W_{k,g}^{\star}=W_{k,g}.
\end{align}
Thus, the edited QK product is realized through a query-only update while leaving shared key weights unchanged, making the edit GQA-safe.

\begin{remark}
In practice, we use the ridge-stabilized least-squares update
\begin{align}
    \Delta W_{q,h}
    =
    (W_{k,g}W_{k,g}^{\top}+\lambda I)^{-1}
    W_{k,g}\Delta M_h^{\top}.
\end{align}
where
\begin{align}
    \lambda
    =
    \epsilon \cdot \frac{1}{r}
    \mathrm{tr}(W_{k,g}W_{k,g}^{\top}).
\end{align}
For small $\lambda$, this approximates the minimum-norm recovery while improving numerical stability.
\end{remark}

\begin{remark}
For edits constructed from the QK product SVD, $\Delta M_h$ lies in the appropriate row/column subspace, so the unregularized solution can realize the product edit exactly under the full-row-rank condition. For component-specific symmetric or antisymmetric edits, the ridge update should be interpreted as the closest query-only realization.
\end{remark}

\subsection{Global SVD vs. Component-Specific Edits}
\label{app:component-mixing}

The global SVD of $M_h$ does not, in general, preserve the symmetric and antisymmetric decomposition. Since
\begin{align}
    M_h = S_h+\Omega_h,
\end{align}
the singular vectors of $M_h$ can mix energy from both channels. Therefore, damping the top singular modes of $M_h$ acts as a broad spectral intervention on the full attention-logit operator.

In contrast, Sym-only and Antisym-only edits operate on the two orthogonal channels separately. Sym-only edits the eigenmodes of $S_h$, isolating reciprocal content-similarity patterns. Antisym-only edits the singular modes of $\Omega_h$, isolating directional attention patterns. These component-specific edits are therefore diagnostic: they test which attention channel carries hallucination-relevant signal.

\subsection{Bounded Off-Target Effect}
\label{app:bounded-effect}

We next show why QK Product Steering is expected to have limited off-target effects. For each selected head,
\begin{align}
    \Delta M_h
    =
    M_h^{\star}-M_h
    =
    -\alpha\sum_{i=1}^{k}\sigma_i u_i v_i^{\top}.
\end{align}
Thus,
\begin{align}
    \mathrm{rank}(\Delta M_h)\leq k,
    \qquad
    \|\Delta M_h\|_F
    =
    |\alpha|
    \left(
    \sum_{i=1}^{k}\sigma_i^2
    \right)^{1/2}.
\end{align}
For hidden states $X$, the induced attention-logit perturbation is
\begin{align}
    \Delta L_h
    =
    X\Delta M_hX^{\top}.
\end{align}
If $\|X\|_2\leq B_X$, then
\begin{align}
    \|\Delta L_h\|_F
    \leq
    \|X\|_2^2\|\Delta M_h\|_F
    \leq
    B_X^2|\alpha|
    \left(
    \sum_{i=1}^{k}\sigma_i^2
    \right)^{1/2}.
    \label{eq:app-logit-bound}
\end{align}
Therefore, the change in attention logits is controlled by the energy of the suppressed low-rank component.

If the downstream computation from the edited attention logits to the final output logits is $L_f$-Lipschitz, then
\begin{align}
    \|\Delta z\|_2
    \leq
    L_f B_X^2|\alpha|
    \left(
    \sum_{i=1}^{k}\sigma_i^2
    \right)^{1/2}.
    \label{eq:app-output-bound}
\end{align}
This does not guarantee identical predictions on every input. Rather, it shows that the edit is localized and bounded: only a small set of attention directions in selected middle layers is modified. This provides a mathematical explanation for why broad capability benchmarks can remain close to the baseline while hallucination-sensitive generation changes more noticeably.

\subsection{Prediction Stability Under a Margin}
\label{app:margin-stability}

For classification or multiple-choice VLM tasks, the bounded-output result gives a simple stability condition. Let $z$ be the original output logits and let $y^\star=\arg\max_i z_i$. Define the prediction margin
\begin{align}
    \gamma
    =
    z_{y^\star}
    -
    \max_{j\neq y^\star}z_j .
\end{align}
Suppose the edit changes every output logit by at most $\eta$:
\begin{align}
    \|\Delta z\|_{\infty}\leq \eta.
\end{align}
If
\begin{align}
    2\eta < \gamma,
\end{align}
then the predicted answer is unchanged.

Indeed, the original top logit can decrease by at most $\eta$, while any competing logit can increase by at most $\eta$. Hence the margin can shrink by at most $2\eta$. If $2\eta<\gamma$, the original top class remains the top class. This explains why single-decision benchmarks can remain stable when examples have sufficiently large margins, while long-form captioning can still be affected through many small generation-step changes.

\begin{algorithm}[t]
\caption{QK Product Steering}
\label{alg:method}
\small
\begin{algorithmic}[1]
\Require Pretrained VLM, selected layers $\mathcal{L}$, modes $k$, damping $\alpha$, ridge scale $\epsilon$
\Ensure Edited VLM

\For{each layer $\ell \in \mathcal{L}$}
    \For{each query head $h$}
        \State Find the shared key head $g$ assigned to $h$.
        \State Extract $W_{q,h}$ and $W_{k,g}$.
        \State Compute thin SVD of the QK product:
        \[
        M_h = W_{q,h}^{\top}W_{k,g}=U\Sigma V^{\top}.
        \]
        \State Dampen top-$k$ singular values:
        \[
        \Sigma^\star_{ii} =
        \begin{cases}
        (1-\alpha)\Sigma_{ii}, & i \leq k,\\
        \Sigma_{ii}, & i > k.
        \end{cases}
        \]
        \State Form edited product:
        \[
        M_h^\star = U\Sigma^\star V^{\top},
        \qquad
        \Delta M_h = M_h^\star - M_h.
        \]
        \State Set
        \[
        \lambda =
        \epsilon \cdot \frac{1}{r}
        \mathrm{tr}(W_{k,g}W_{k,g}^{\top}).
        \]
        \State Compute query-only update:
        \[
        \Delta W_{q,h}
        =
        (W_{k,g}W_{k,g}^{\top}+\lambda I)^{-1}
        W_{k,g}\Delta M_h^{\top}.
        \]
        \State Update query weight:
        \[
        W_{q,h} \leftarrow W_{q,h}+\Delta W_{q,h}.
        \]
        \State Keep key weight fixed:
        \[
        W_{k,g} \leftarrow W_{k,g}.
        \]
    \EndFor
\EndFor

\State \Return edited VLM
\end{algorithmic}
\end{algorithm}

\section{Implementation Details}
\label{sec:implementation-details}

We provide implementation details to make the proposed edit reproducible. QK Product Steering is applied as a one-time offline modification to the pretrained model weights. For each selected layer and query head, we identify the corresponding shared key head under grouped-query attention, compute the per-head QK product through the efficient thin-SVD procedure, damp the selected spectral modes, and recover the edited query weight using the closed-form query-only update. All computations are performed directly on the attention projection weights; no training data, gradient updates, or loss optimization are used. After the edited weights are written back to the model, inference uses the original architecture, tokenizer, image processor, and decoding configuration unchanged.

\subsection{Proposed Algorithm}

Algorithm~\ref{alg:method} summarizes the full QK Product Steering procedure. The method is applied once to a pretrained VLM over a selected set of layers and query heads. For each query head, we first identify its corresponding shared key head under grouped-query attention and form the per-head QK product. We then compute its spectral decomposition, suppress the top-$k$ singular modes, and obtain an edited product. Finally, the edited product is mapped back to the model through a closed-form query-only update, while the shared key weight is kept fixed. This makes the edit GQA-safe and preserves the original inference procedure. Since all steps are computed directly from pretrained weights, the algorithm requires no training data, gradient updates, or decoding-time modification.

\subsection{Training-Free and Zero-Cost Edit}
QK Product Steering is computed directly from pretrained attention weights. It does not use training data, optimize a loss, or backpropagate through the model. Once the edited query weights are written back, the architecture and decoding procedure remain unchanged. Therefore, the edited model has the same inference cost as the original model.

\subsection{Statistical Significance}
\label{sec:statistical-significance}

We assess statistical significance using paired bootstrap resampling over image IDs. For CHAIR evaluation, each edited model is paired with the corresponding unedited baseline on the same COCO 2014 \texttt{val} images and we compute the distribution of $\Delta$CHAIR$_s$ and $\Delta$CHAIR$_i$ over paired-bootstrap samples. We use $n_{\text{boot}}{=}200$ resamples for the full-split $N{=}4{,}946$ runs (where each bootstrap pass is expensive) and $n_{\text{boot}}{=}1{,}000$ for the $N{=}1{,}000$ Qwen $k$- and $\alpha$-sweeps used in the ablations. We report $95\%$ percentile confidence intervals and the bootstrap-estimated $P(\Delta > 0)$, treating an edit as statistically reliable only when its CI excludes zero. The same paired setup is used for all CHAIR controls, including bottom-$k$, random-$k$, symmetric-only, antisymmetric-only, and amplification.

Table~\ref{tab:ci} reports paired-bootstrap CIs at the preferred middle-band setting $(k{=}3,\alpha{=}1)$ on Qwen2.5-VL-7B (Sym top $k_s{=}3$, $N{=}1{,}000$) and Pixtral-12B (QK Product Steering top-$k$, $N{=}4{,}946$). On Qwen, the symmetric-channel edit excludes zero on both $\Delta$CHAIR$_s$ and $\Delta$CHAIR$_i$, while the matched bottom-$k$ control on the same channel does not -- the signal lives in the top eigenmodes of $S_h$, not in arbitrary low-rank perturbations of $\Wq$. On Pixtral, QK Product Steering top-$k$ excludes zero on $\Delta$CHAIR$_i$ with $P(\Delta>0)=0.000$, whereas its matched random-$k$ control does not. Together these CIs back the body's main hallucination claims at the cleanest available sample size per model.

\begin{table*}[h]
\centering
\small
\setlength{\tabcolsep}{3pt}
\begin{tabular}{lrcrc}
\toprule
Variant & $\Delta_s$ & 95\% CI$_s$ & $\Delta_i$ & 95\% CI$_i$ \\
\midrule
\multicolumn{5}{l}{\emph{Qwen2.5-VL-7B} ($N{=}1{,}000$)} \\
Sym top $k_s{=}3$        & $-2.17$ & $[-2.95, -1.33]$ & $-0.67$ & $[-1.44, +0.03]$ \\
Antisym top $k_a{=}6$    & $-0.12$ & $[-0.92, +0.62]$ & $+0.31$ & $[-0.44, +1.10]$ \\
Sym bot $k_s{=}3$ (ctrl) & $+0.14$ & $[-0.56, +0.80]$ & $-0.52$ & $[-1.18, +0.13]$ \\
Antisym bot $k_a{=}6$ (ctrl) & $+0.15$ & $[-0.57, +0.86]$ & $-0.58$ & $[-1.28, +0.09]$ \\
\midrule
\multicolumn{5}{l}{\emph{Pixtral-12B} ($N{=}4{,}946$)} \\
QK Product Steering top $k{=}3$ & $-0.36$ & $[-0.85, +0.17]$ & $-0.83$ & $[-1.26, -0.33]$ \\
Random $k{=}3$ (ctrl)           & $-0.17$ & $[-0.56, +0.21]$ & $-0.35$ & $[-0.74, +0.13]$ \\
Sym top $k_s{=}3$               & $-0.06$ & $[-0.41, +0.32]$ & $-0.17$ & $[-0.56, +0.21]$ \\
Antisym top $k_a{=}6$           & $-0.10$ & $[-0.50, +0.32]$ & $-0.37$ & $[-0.73, +0.04]$ \\
\bottomrule
\end{tabular}
\caption{Paired-bootstrap $95\%$ CIs on $\Delta$CHAIR$_s$ ($\Delta_s$) and $\Delta$CHAIR$_i$ ($\Delta_i$) at the preferred middle-band $(k{=}3,\alpha{=}1)$ edit, $n_{\text{boot}}{=}200$. Bolded CIs (above) exclude zero. The Qwen Sym top-$k_s{=}3$ row defends the main-paper claim that the symmetric-channel CHAIR$_s$ drop on Qwen is CI-significant; the Pixtral QK Product Steering top-$k$ row defends the corresponding claim on Pixtral $\Delta$CHAIR$_i$. Matched bottom-$k$ and random-$k$ controls in both blocks have CIs that include zero, ruling out arbitrary low-rank perturbations of $\Wq$.}
\label{tab:ci}
\end{table*}

For probe and capability benchmarks (POPE, HallusionBench, MME, MMMU) we report raw within-pipeline differences and interpret them relative to the matched random-mode control, since these evaluations are used to verify capability preservation rather than to establish the main hallucination claim.

\subsection{Model Architectures and Details}
\label{sec:appendix-models-hardware}

\noindent\textbf{Per-model architectures.} The three GQA VLMs used in the main paper have the following backbones. Qwen/Qwen2.5-VL-7B-Instruct: $28$ transformer blocks, $28$ query heads, $4$ key/value heads (group size $7$), head dimension $128$. OpenGVLab/InternVL3-8B: $28$ transformer blocks, $28$ query heads, $7$ key/value heads (group size $4$), head dimension $128$. mistral-community/pixtral-12b (the HF-native repackage of Mistral's Pixtral-12B): $40$ transformer blocks, $32$ query heads, $8$ key/value heads (group size $4$), head dimension $128$. All three are loaded in \texttt{bfloat16} with \texttt{attn\_implementation=eager}; the eager attention path is what permits the per-(layer, head) query weight rewrites used by QK Product Steering.

\subsection{Hardware and Compute Cost}
All experiments -- the apply-edit step, the CHAIR / POPE / HallusionBench / MME / MMMU evaluations, the teacher-forced attention measurements, and the per-head spectral decomposition diagnostics -- run on a single NVIDIA A100 (80GB) GPU. Each job loads one model into memory once, applies the edit in place, and then runs the benchmark serially with greedy decoding; no multi-GPU sharding or quantization is used. The apply-edit step itself completes in $\sim 6$--$10$ seconds for all three models and contributes negligibly to wall-clock time relative to generation. Approximate per-job wall-clock times: CHAIR on the full COCO 2014 \texttt{val} split ($N{=}4{,}946$) takes $\sim 1.5$--$2.5$ hours on Qwen2.5-VL-7B and InternVL3-8B and $\sim 11$ hours on Pixtral-12B (longer due to model size); POPE ($N{=}9{,}000$) and MME ($N{\approx}2{,}374$) each take $\sim 20$--$35$ minutes per variant; HallusionBench ($N{=}447$) and MMMU val ($N{=}900$) each take $\sim 5$--$15$ minutes per variant; the per-head spectral decomposition is GPU-only weight math and finishes in under a minute per model.

\subsection{Artifact Use}
\label{sec:artifact-use}

We use existing models, datasets, and benchmarks consistently with their intended research use. The evaluated VLM checkpoints and benchmarks, including COCO, CHAIR, POPE, HallusionBench, MME, and MMMU, are used only for research evaluation of hallucination mitigation and multimodal capability. We do not introduce a new dataset or pretrained model artifact. Any released code is intended for research use, and users should comply with the licenses and access conditions of the underlying models and datasets.

\section{Additional Experimental Results}
\label{sec:appendix-additional}

This section provides implementation and evaluation details that support the main experimental claims. We first describe the per-model architectures and the hardware used to run all experiments. We then describe the benchmark protocols, sample sizes, decoding settings, and scoring procedures used for hallucination, probe, and capability evaluation. We then define the control edits used to test whether the observed gains come from dominant QK modes rather than arbitrary spectral perturbations. Finally, we report additional null checks and attention-trajectory analyses, which show that the effect is not explained by local token suppression or immediate pre-noun re-grounding, but instead emerges through downstream attention changes after the edited middle layers.

\subsection{Benchmarks and Evaluation Protocol}

\noindent\textbf{Per-benchmark protocols and sample sizes.} Generative hallucination is evaluated with CHAIR on the full COCO 2014 \texttt{val} split ($N = 4{,}946$) under the standard descriptive prompt with greedy decoding and no image-resolution cap; we report both per-sentence CHAIR$_s$ and per-mention CHAIR$_i$ and pair every edit against the unedited baseline on the same image IDs for paired-bootstrap CIs (resample counts in Appendix~\ref{sec:statistical-significance}). Probe hallucination uses POPE on $N = 9{,}000$ random / popular / adversarial yes/no items and HallusionBench on its $N = 447$ figure-paired items (covering $1{,}129$ underlying question instances aggregated to the paired figure level under the official scoring protocol), both scored with a regex extractor matching the official protocol. General capability uses MME (perception + cognition total) and MMMU val under the standard \texttt{lmms-eval} v0.7 multiple-choice protocol. POPE, MME, and MMMU all run through the \texttt{mmmu\_val\_qwen}, \texttt{pope}, and \texttt{mme} tasks of the same harness with each model's reference inference config; we exclude any task that requires a paid LLM judge to keep evaluation reproducible.

\subsection{Controls and Ablations}
\label{sec:controls}

At the default hyperparameters we additionally report a bottom-$k$ control selecting the smallest-magnitude modes, a random-$k$ control selecting modes uniformly at random, and (for QK Product Steering only) an amplify control at $\alpha < 0$ that doubles rather than removes the selected modes. Definitions of the three edit families (QK Product Steering, Sym-only, Antisym-only, Sym+Antisym) are in Section~\ref{sec:method}, Table~\ref{tab:sym-skew-variants}.

\subsection{Sym/Antisym $k$-Sweep}
\label{sec:appendix-k-sweep}

Table~\ref{tab:appendix-k-sweep} reports the sym/antisym top-$k$ and bottom-$k$ sweep on Qwen2.5-VL-7B at fixed $\alpha{=}1$ on $N{=}1{,}000$ COCO images. The symmetric top-$k$ edit peaks at $k_s{=}3$ on $\Delta$CHAIR$_s$ (CI excludes zero) and does not improve at $k_s{=}5$. The antisymmetric channel is non-monotone and weak across $k_a \in \{2, 6, 12\}$ -- no setting recovers the symmetric top-$3$ effect. Bottom-$k$ at matched rank is null on both channels (CIs include zero), supporting the conclusion that the hallucination signal lives in the top eigenmodes of $S_h$ rather than in arbitrary low-rank perturbations of $\Wq$ or in the antisymmetric channel.

\begin{table}[h]
\centering
\small
\setlength{\tabcolsep}{4pt}
\begin{tabular}{lcrr}
\toprule
Variant & $k$ & $\Delta$CHAIR$_s$ & 95\% CI \\
\midrule
Sym top   & 1  & $-2.38$ & -- \\
\textbf{Sym top}   & \textbf{3}  & $\mathbf{-2.17}$ & $\mathbf{[-2.95, -1.33]}$ \\
Sym top   & 5  & $-2.93$ & -- \\
Sym bot   & 3  & $+0.14$ & $[-0.56, +0.80]$ \\
Antisym top & 2  & $-1.41$ & -- \\
Antisym top & 6  & $-0.12$ & $[-0.92, +0.62]$ \\
Antisym top & 12 & $-1.67$ & -- \\
Antisym bot & 6  & $+0.15$ & $[-0.57, +0.86]$ \\
\bottomrule
\end{tabular}
\caption{Sym/Antisym top-$k$ and bottom-$k$ sweep on Qwen2.5-VL-7B at $\alpha{=}1$, middle-band edit, $N{=}1{,}000$ (baseline CHAIR$_s = 32.75$). Bootstrap CIs ($n_{\text{boot}}{=}1{,}000$) are computed for the four CI-reported rows; rows without CIs are point estimates from \texttt{chair\_metrics.json}.}
\label{tab:appendix-k-sweep}
\end{table}

\subsection{$\alpha$-Sweep with Confidence Intervals}
\label{sec:appendix-alpha-sweep}

Table~\ref{tab:appendix-alpha-sweep} reports the Sym top-$k_s{=}3$ $\alpha$-sweep on Qwen2.5-VL-7B, $N{=}1{,}000$, with paired-bootstrap CIs at each $\alpha$. The $\Delta$CHAIR$_s$ response is monotonically more negative in $\alpha$ on the zero-out range $[0, 1]$, with CIs excluding zero from $\alpha{=}0.5$ onward. The $\alpha{>}1$ rows continue to lower CHAIR$_s$ -- the selected modes are sign-flipped rather than removed -- but as Section~\ref{sec:appendix-capability-tradeoff} shows, this CHAIR gain comes with a measurable capability cost.

\begin{table}[h]
\centering
\small
\setlength{\tabcolsep}{4pt}
\begin{tabular}{lrr}
\toprule
$\alpha$ & $\Delta$CHAIR$_s$ & 95\% CI \\
\midrule
$0.3$ & $-0.56$ & $[-1.34, +0.15]$ \\
$0.5$ & $-1.02$ & $[-1.74, -0.21]$ \\
$0.7$ & $-1.39$ & $[-2.20, -0.61]$ \\
$1.0$ & $-2.17$ & $[-2.95, -1.33]$ \\
\bottomrule
\end{tabular}
\caption{$\alpha$-sweep for Sym top-$k_s{=}3$ on Qwen2.5-VL-7B, middle-band edit, $N{=}1{,}000$, $n_{\text{boot}}{=}1{,}000$. $\Delta$CHAIR$_s$ becomes more negative monotonically in $\alpha$; CIs exclude zero from $\alpha{=}0.5$ onward.}
\label{tab:appendix-alpha-sweep}
\end{table}

\subsection{Joint $(k, \alpha)$ Grid}
\label{sec:appendix-joint-grid}

We additionally run a 122-cell joint $(k, \alpha)$ grid on Qwen2.5-VL-7B at $N{=}500$ across all three edit families: QK Product Steering top-$k$, Sym top-$k_s$, and Antisym top-$k_a$. CHAIR$_s$ at fixed $\alpha \ge 1$ decreases monotonically with $k$ for PSVD and Sym; Antisym remains weak at every $(k, \alpha)$ cell. The per-method grid minima are reported in Table~\ref{tab:appendix-joint-grid}. The default cell $(k{=}3,\alpha{=}1)$ is not the global CHAIR$_s$ minimum: the PSVD grid minimum sits at $(k{=}6,\alpha{=}2)$ at CHAIR$_s = 24.78$ versus the baseline $31.44$. However, the larger-$k$ and larger-$\alpha$ cells carry a capability cost that the default cell does not, as quantified in Section~\ref{sec:appendix-capability-tradeoff}.

\begin{table}[h]
\centering
\small
\setlength{\tabcolsep}{4pt}
\begin{tabular}{lccc}
\toprule
Method & best $k^\star$ & best $\alpha^\star$ & CHAIR$_s$ \\
\midrule
Baseline       & --  & --  & $31.44$ \\
PSVD top-$k$   & $6$  & $2.0$ & $\mathbf{24.78}$ \\
Sym top-$k_s$  & $4$  & $2.0$ & $27.28$ \\
Antisym top-$k_a$ & $10$ & $2.0$ & $28.16$ \\
\bottomrule
\end{tabular}
\caption{Per-method grid minima from the 122-cell joint $(k, \alpha)$ sweep on Qwen2.5-VL-7B, $N{=}500$. Antisym's minimum sits well above PSVD's and Sym's at every $(k, \alpha)$ cell.}
\label{tab:appendix-joint-grid}
\end{table}

\subsection{Capability Tradeoff at Higher $(k, \alpha)$}
\label{sec:appendix-capability-tradeoff}

To isolate the capability cost of each axis, we evaluate POPE / MME / MMMU at three Qwen cells: the default $(k{=}3,\alpha{=}1)$; the matched-$k$ stronger-$\alpha$ cell $(k{=}3,\alpha{=}2)$; and the matched-$\alpha$ higher-$k$ cell $(k{=}6,\alpha{=}1)$. Table~\ref{tab:appendix-capability-tradeoff} shows the result. Both axes independently cost capability, with $\alpha$ the larger lever: MMMU loses $7.11$pp at $\alpha{=}2$ versus $3.78$pp at $k{=}6$, and MME loses $\sim 4\times$ more at $\alpha{=}2$ than at $k{=}6$. POPE F1 is essentially unchanged in all three cells. The default cell $(k{=}3,\alpha{=}1)$ is therefore the smallest principled cell where the CHAIR drop is CI-significant and capability is preserved within the random-control noise budget.

\begin{table}[h]
\centering
\small
\setlength{\tabcolsep}{3pt}
\begin{tabular}{lrrr}
\toprule
Setting & MMMU & MME & POPE F1 \\
\midrule
Baseline                       & $51.56$ & $2339.2$ & $86.18$ \\
$(k{=}3,\alpha{=}1)$ default & $50.89$ & $2344.4$ & $86.68$ \\
$(k{=}6,\alpha{=}1)$ -- $k$ effect    & $47.11$ & $2321.0$ & $87.07$ \\
$(k{=}3,\alpha{=}2)$ -- $\alpha$ effect & $43.78$ & $2253.5$ & $85.74$ \\
\bottomrule
\end{tabular}
\caption{Capability tradeoff at higher $(k, \alpha)$ on Qwen2.5-VL-7B. MMMU and MME degrade monotonically with both $k$ and $\alpha$; $\alpha$ is the larger lever. POPE F1 is essentially unchanged across the three cells.}
\label{tab:appendix-capability-tradeoff}
\end{table}

\subsection{Cross-Architecture $k$ Selection}
\label{sec:appendix-cross-arch-k}

The full $(k, \alpha)$ ablation in Sections~\ref{sec:appendix-k-sweep}--\ref{sec:appendix-capability-tradeoff} is conducted on Qwen2.5-VL-7B and drives the $(k{=}3, \alpha{=}1)$ default on that model. For the cross-architecture rows in Table~\ref{tab:main}, we keep $\alpha{=}1$ and the middle-third edit band fixed and select $k$ per model as the lower-CHAIR$_s$ cell at the same protocol: $k{=}1$ for InternVL3-8B and $k{=}3$ for Pixtral-12B. The InternVL3 choice is consistent with its higher mid-band $E_3$ profile (Figure~\ref{fig:top3_energy_per_layer}), where a smaller $k$ keeps the edit surgical.

\subsection{Per-Token and Pre-Noun Null Checks}
\label{sec:appendix-nulls}

Two natural null hypotheses for the CHAIR$_s$ reduction would be that QK Product Steering selectively suppresses the per-token log-probability of hallucinated noun tokens at fixed prefix, or that it visibly re-grounds attention at the position immediately before a hallucinated noun. Under teacher-forced prefixes neither holds. QK Product Steering top-$3$ leaves the hallucinated-token log-probability essentially unchanged ($-0.004$) while reducing the grounded-token log-probability \emph{more} ($-0.049$); the attention-to-image probability at the pre-noun position is within $0.002$ of every control (Table~\ref{tab:mechanism}). The mechanism is therefore neither token-local nor position-local; the rest of the discussion in the main text locates it.

\begin{table}[h]
  \centering\small
  \setlength{\tabcolsep}{3pt}
  \begin{tabular}{lcccc}
    \toprule
     & \multicolumn{2}{c}{$\Delta \log P$} & \multicolumn{2}{c}{attn$\to$img} \\
    \cmidrule(lr){2-3}\cmidrule(lr){4-5}
    Variant & hallu & gr. & hallu & gr. \\
    \midrule
    Matched-norm rand.  & $-0.015$ & $+0.006$ & 0.126 & 0.131 \\
    Random-$k$           & $+0.003$ & $+0.013$ & 0.126 & 0.131 \\
    QK-Prod top-$3$, $\alpha{=}1$ & $-0.004$ & $-0.049$ & 0.125 & 0.129 \\
    \bottomrule
  \end{tabular}
  \caption{Per-token log-probability deltas and attention-to-image probability at the pre-noun position, split by hallucinated vs.\ grounded noun tokens. $N{=}291$ hallucinated noun tokens and $N{=}214$ grounded noun tokens across 100 COCO images, middle layers.}
  \label{tab:mechanism}
\end{table}

\subsection{Per-Layer Attention Trajectory}
\label{sec:appendix-attn-trajectory}

\begin{figure}[h]
  \centering
  \includegraphics[width=1.0\textwidth]{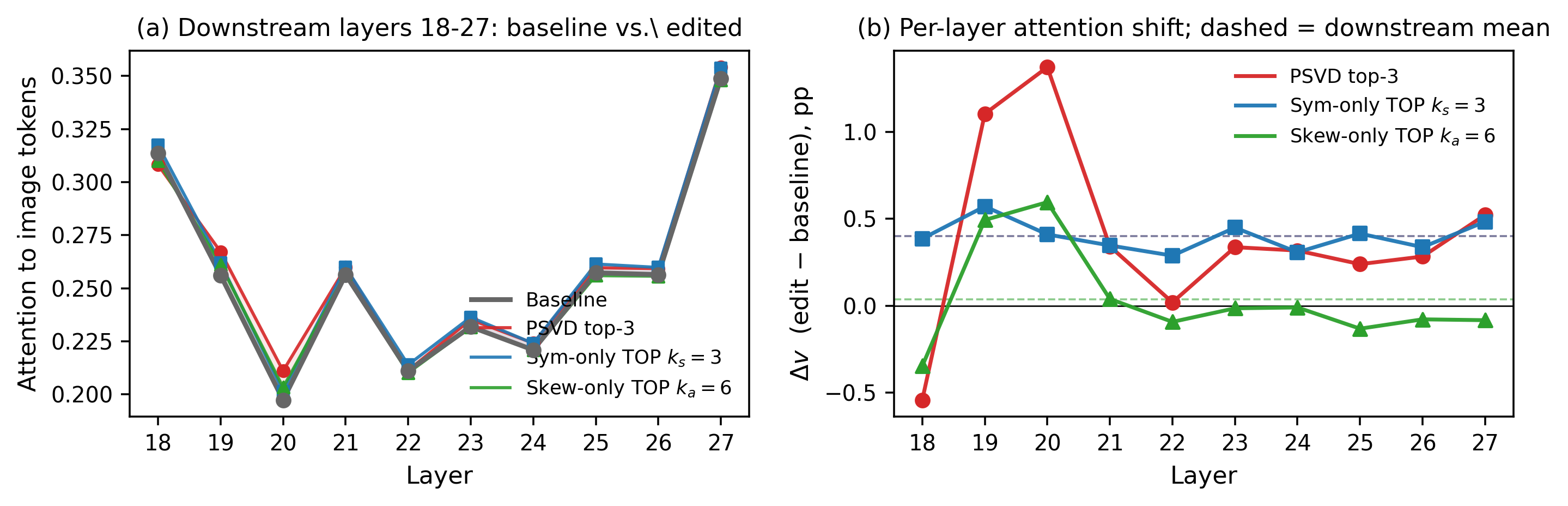}
  \caption{Per-layer shift $\Delta v$ (edit $-$ baseline, percentage points) across all $28$ layers of Qwen2.5-VL-7B for the three spectral methods, teacher-forced. Shaded band: edited middle layers $L_9$--$L_{17}$. The two CHAIR-active methods (QK Product Steering, Sym-only) produce a uniform positive shift across every downstream unedited layer; Antisym-only, which is inert on CHAIR, does not.}
  \label{fig:attention_delta}
\end{figure}

Figure~\ref{fig:per_head_dv} further supports the trajectory-cascade explanation in Section~\ref{sec:trajectory}. The edited layers show nearly zero layer-mean shift due to per-head cancellation, while the effect emerges in the downstream unedited layers, with the strongest increase immediately after the edit site.


\begin{figure}[h]
  \centering
  \includegraphics[width=1.0\textwidth]{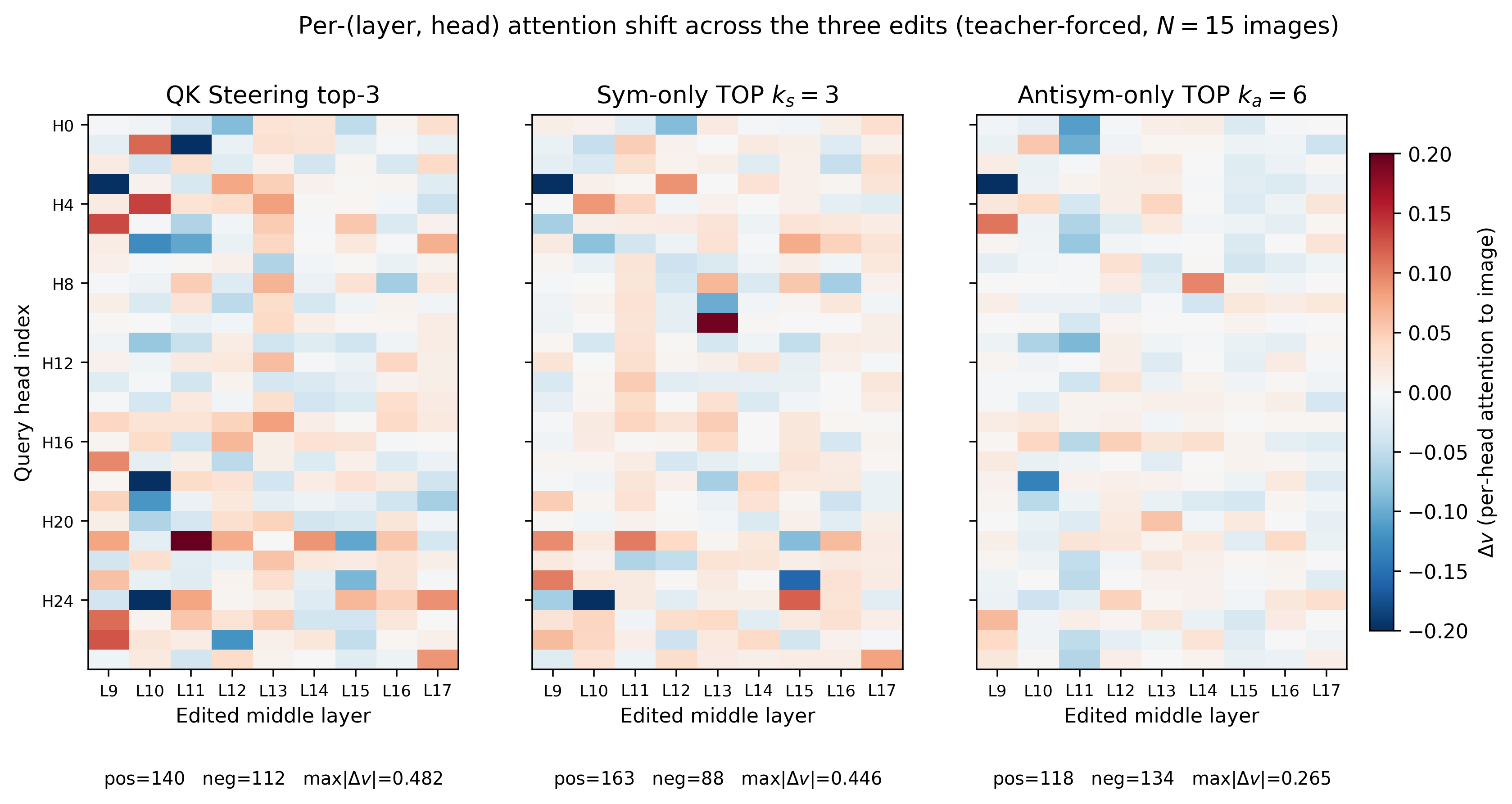}
  \caption{Per-(layer, head) attention-to-image shift $\Delta v$ inside the edited middle band ($9 \times 28 = 252$ cells), QK Product Steering top-$3$ on Qwen2.5-VL-7B. Per-head shifts reach $\pm 0.48$pp with $138$ positive and $114$ negative cells; the layer mean is $+0.019$pp.}
  \label{fig:per_head_dv}
\end{figure}

Figure~\ref{fig:per_head_dv} disaggregates the layer-mean $\Delta v$ inside the edited band over its $252$ (layer, head) cells. The per-head shifts are large in magnitude ($\pm 0.48$pp) and roughly balanced in sign ($138$ positive, $114$ negative), so they cancel in the layer mean ($+0.019$pp). They do not cancel in the residual stream those heads jointly write, which is what the downstream layers in Figure~\ref{fig:attention_delta} respond to with the uniform positive shift.

\section{Qualitative Examples with COCO Images}
\label{sec:appendix-qual}

This section provides qualitative examples illustrating how QK Product Steering changes generated captions on COCO images. We show cases where the edited model removes hallucinated object mentions from the baseline caption while preserving grounded visual content. These examples are intended to complement the quantitative CHAIR results by showing that the reduction in hallucination corresponds to more image-grounded descriptions rather than shorter or less informative captions.


Across the full COCO validation set ($N{=}4{,}946$), we identify $756$ images where QK Product Steering top-$3$ satisfies a strict improvement criterion: it removes at least one hallucinated COCO object from the baseline caption, introduces no new hallucinated objects, and preserves all grounded object mentions. Figure~\ref{tab:qualitative_appendix} presents eight representative examples, ranked by the number of hallucinations removed. For each case, we show the COCO image together with truncated baseline and QK Product Steering top-$3$ captions.

\section{Limitations}
\label{sec:limitations-future}




QK Product Steering is a lightweight post-hoc edit with a focused scope: it modifies the content-based query-key product while leaving the architecture and inference procedure unchanged. This makes it efficient, but it does not explicitly capture position-dependent effects such as rotary positional embeddings. Extending the method to position-aware query-key interactions is an important future direction.
The method also depends on a few hyperparameters, including edited layers, heads, number of modes, and damping strength. Although middle-layer dominant modes work well across multiple VLMs in our experiments, the best configuration may vary by architecture. 


\section{Ethics Statement}
\label{sec:appendix-ethics}

QK Product Steering is a lightweight post-hoc intervention for reducing object hallucination in VLM-generated captions. It requires no additional data, annotations, gradient updates, or decoding-time changes, making it easy to apply to existing models.
However, QK Product Steering is not a universal hallucination solution. It mainly targets free-form object hallucination by weakening dominant prior-driven attention modes, and may not address failures from weak visual perception, ambiguous images, or adversarial yes/no reasoning. Therefore, in safety-critical settings, QK Product Steering should be used as a complementary reliability tool alongside stronger grounding, verification, and human oversight.

\section{Use of Large Language Models (LLMs)}

Large language models (LLMs) were used only as writing aids during manuscript preparation. Their use was limited to improving grammar, clarity, wording, and organization. The authors reviewed and revised all LLM-assisted text to ensure that the final manuscript accurately represents the intended meaning, technical claims, and scientific contributions.

%
%
%
%

\providecommand{\Method}{QK Product Steering}

\clearpage
\onecolumn

\providecommand{\HALLU}[1]{\textcolor{red!75!black}{\textbf{#1}}}
\providecommand{\GROUND}[1]{\textcolor{green!50!black}{\textbf{#1}}}
\providecommand{\RECOV}[1]{\textcolor{blue!70!black}{\textbf{#1}}}

\subsection*{Color Reference}
\HALLU{Red} = hallucinated by the baseline, \GROUND{green} = grounded, \RECOV{blue} = grounded object the baseline missed and the edit recovers.

\noindent\begin{minipage}{\textwidth}
\paragraph{Example 1 (image 120853): plated sandwich with coleslaw.}
\textsc{Removed:} \emph{fork}; \textsc{Kept:} \emph{bowl, carrot, cell phone, dining table, sandwich}.

\noindent
\begin{minipage}[t]{0.32\textwidth}
  \vspace{0pt}
  \includegraphics[width=\linewidth]{}
\end{minipage}\hfill
\begin{minipage}[t]{0.65\textwidth}
  \vspace{0pt}
  \footnotesize
  \textbf{Baseline.} This image shows a meal served on a green plate at an outdoor dining setting, likely a cafe or restaurant. The main dish is a \GROUND{sandwich} made with toasted bread, which appears to be filled with grilled chicken and possibly some vegetables. The \GROUND{sandwich} is cut diagonally into two halves, with one half placed on top of the other. Accompanying the \GROUND{sandwich} is a small white \GROUND{bowl} containing coleslaw, which includes shredded cabbage, \GROUND{carrots}, and possibly other vegetables, dressed with a light sauce. A small white plastic \HALLU{fork} is placed inside the \GROUND{bowl} for serving. In the background, there is a glass of red liquid, possibly iced tea or juice, and a \GROUND{smartphone} resting on the \GROUND{table}. A salt shaker is also visible on the left side of the frame. The \GROUND{table} has a metal mesh design, and the overall ambiance suggests a casual dining experience.

  \vspace{2pt}
  \textbf{\Method.} This image shows a meal served on a \GROUND{table}, likely at an outdoor dining area given the metal mesh \GROUND{table} and the presence of sunlight. The main focus is a plate with a \GROUND{sandwich} cut in half, revealing its contents. The \GROUND{sandwich} appears to be made with toasted bread, possibly sourdough or a similar type, and contains slices of grilled chicken as the filling.  Next to the \GROUND{sandwich}, there is a small \GROUND{bowl} filled with coleslaw, which includes shredded cabbage, \GROUND{carrots}, and other vegetables, dressed in a creamy dressing. The coleslaw adds a fresh and colorful element to the meal. In the background, there's a glass of what looks like iced tea, a \GROUND{smartphone} resting on the \GROUND{table}, and a pepper shaker. The overall setting suggests a casual dining experience, perhaps at a café or restaurant. The lighting indicates it might be daytime, and the scene has a relaxed atmosphere.
\end{minipage}
\end{minipage}

\vspace{10pt}

\noindent\begin{minipage}{\textwidth}
\paragraph{Example 2 (image 150417): birthday cake in a home kitchen.}
\textsc{Removed:} \emph{toaster}; \textsc{Kept:} \emph{bowl, cake, cup, dining table, person, refrigerator}.

\noindent
\begin{minipage}[t]{0.32\textwidth}
  \vspace{0pt}
  \includegraphics[width=\linewidth]{}
\end{minipage}\hfill
\begin{minipage}[t]{0.65\textwidth}
  \vspace{0pt}
  \footnotesize
  \textbf{Baseline.} The image shows an indoor setting, likely a kitchen or dining area, where two individuals are present. The focus is on the older \GROUND{woman} seated at a \GROUND{table}, wearing glasses and a red jacket over a blue top. She appears to be looking at a small \GROUND{cake} placed in front of her on the \GROUND{table}. The \GROUND{cake} has multiple lit candles, suggesting it might be for a birthday celebration. The \GROUND{table} is cluttered with various items, including a pink \GROUND{cup}, a clear \GROUND{bowl}, some papers, and what looks like a decorative rabbit figurine. A \GROUND{man}'s arm is visible on the right side of the frame, indicating he is standing next to the \GROUND{table}. He seems to be interacting with the \GROUND{cake} or the \GROUND{woman}, possibly lighting another candle or assisting with the celebration. In the background, there is a \GROUND{refrigerator} adorned with magnets, notes, and pictures, as well as a countertop with various kitchen items such as \GROUND{containers}, utensils, and a \HALLU{toaster}. The overall atmosphere suggests a casual and warm family gathering.

  \vspace{2pt}
  \textbf{\Method.} The image shows an indoor setting, likely a kitchen or dining area, where two individuals are present. The focus is on an elderly \GROUND{woman} seated at a \GROUND{table}, wearing glasses and a red cardigan over a blue top. She appears to be looking at the \GROUND{cake} in front of her with a thoughtful expression. The \GROUND{cake} is round, white, and decorated with colorful candles, suggesting it might be for a birthday celebration. A hand, presumably belonging to another \GROUND{person}, is seen lighting one of the candles. The individual lighting the candle is partially visible, with only their arm and part of their torso shown. The \GROUND{table} has various items on it, including a pink \GROUND{cup}, a clear \GROUND{bowl}, and some papers. In the background, there's a \GROUND{refrigerator} adorned with magnets and notes, along with other kitchen appliances and cabinets. The room is lit by warm artificial light, giving the scene a cozy, homey feel. The overall atmosphere suggests a casual and intimate gathering, possibly a family celebration.
\end{minipage}
\end{minipage}

\vspace{10pt}

\noindent\begin{minipage}{\textwidth}
\paragraph{Example 3 (image 237517): patio dinner on a wooden table.}
\textsc{Removed:} \emph{bench}; \textsc{Recovered:} \emph{spoon}; \textsc{Kept:} \emph{bowl, dining table, fork, knife, potted plant}.

\noindent
\begin{minipage}[t]{0.32\textwidth}
  \vspace{0pt}
  \includegraphics[width=\linewidth]{}
\end{minipage}\hfill
\begin{minipage}[t]{0.65\textwidth}
  \vspace{0pt}
  \footnotesize
  \textbf{Baseline.} This image depicts an outdoor dining setup on a wooden \GROUND{table}, likely in a backyard or garden area. The \GROUND{table} is set with various dishes and utensils, suggesting a meal is either being enjoyed or prepared. 1. Foreground:    - A white plate holds a piece of steak, roasted potatoes, and grilled zucchini slices.    - A \GROUND{knife} and \GROUND{fork} rest on the plate, indicating that someone is eating or has been eating.    - To the right of the plate, there's a small \GROUND{bowl} containing what appears to be a salad with greens and possibly some crumbled cheese or dressing. 2. Midground:    - Another plate further back contains a baked dish, possibly a quiche or pie, served with a side of bread or toast.    - Next to it, there’s a small \GROUND{bowl} filled with fresh greens, possibly a salad or herbs. 3. Background:    - The setting includes a white wooden \HALLU{bench} and a \GROUND{potted plant} with succulent-like leaves on the left side of the \GROUND{table}.    - Beyond the \GROUND{table}, there is a brick patio leading to a grassy area with a hedge in the distance, suggesting a private garden space. 4. Atmosphere:    - The scene conveys a relaxed, casual dining experience, likely during the daytime given the natural light.    - The presence of plants and the outdoor setting adds a tranquil and cozy ambiance to the meal. The overall composition suggests a simple yet enjoyable meal in a peaceful outdoor environment.

  \vspace{2pt}
  \textbf{\Method.} This image depicts an outdoor dining setup on a wooden \GROUND{table}, likely in a backyard or patio area. The \GROUND{table} is set with various dishes and utensils, suggesting a meal is being enjoyed. 1. Foreground Plate:     - A white plate holds a piece of steak, a baked potato, and two slices of grilled zucchini.    - A \GROUND{knife} and \GROUND{fork} rest on the plate, indicating that the meal is ready to be eaten. 2. Background Plate:    - Another plate contains what appears to be a baked dish, possibly a quiche or pie, served in a small bread \GROUND{bowl}.    - A \RECOV{spoon} is placed inside the dish, and it seems to have a creamy filling. 3. \GROUND{Bowl} of Salad:    - Two \GROUND{bowls} are visible on the right side of the \GROUND{table}. One contains a green salad with leafy greens and possibly some croutons or cheese.    - The other \GROUND{bowl} has a similar salad but with a different mix of greens and toppings. 4. Decorative Elements:    - To the left of the \GROUND{table}, there is a \GROUND{potted plant} with succulent-like leaves, adding a touch of greenery to the scene.    - A small pot with pink flowers is also visible, enhancing the aesthetic appeal of the setting. 5. Outdoor Setting:    - The \GROUND{table} is surrounded by a white wooden railing, which suggests the dining area is part of a deck or patio.    - Beyond the railing, there is a grassy area and a brick pathway leading into the garden.    - The background includes a hedge and some trees, indicating a private, well-maintained outdoor space. The overall ambiance of the image is cozy and inviting, with a focus on fresh food and a pleasant outdoor environment.
\end{minipage}
\end{minipage}

\vspace{10pt}

\noindent\begin{minipage}{\textwidth}
\paragraph{Example 4 (image 384661): stove with a utensil cart.}
\textsc{Removed:} \emph{scissors}; \textsc{Recovered:} \emph{refrigerator}; \textsc{Kept:} \emph{bottle, oven}.

\noindent
\begin{minipage}[t]{0.32\textwidth}
  \vspace{0pt}
  \includegraphics[width=\linewidth]{}
\end{minipage}\hfill
\begin{minipage}[t]{0.65\textwidth}
  \vspace{0pt}
  \footnotesize
  \textbf{Baseline.} The image shows a kitchen setup with a stainless steel gas \GROUND{stove} and an adjacent metal shelving unit. The \GROUND{stove} has four burners on top, a control panel with digital buttons, and a storage compartment below for warming or storing items. The \GROUND{oven} door is closed, and the brand name "Bake Warm Store" is visible at the bottom of the \GROUND{oven}. To the right of the \GROUND{stove}, there is a metal shelving unit with three levels. On the top shelf, there is a large silver pot, a smaller pot, and some kitchen utensils such as spatulas and a pair of \HALLU{scissors}. There is also a \GROUND{bottle} of oil or sauce near the pots. The middle shelf holds a small red appliance (possibly a blender or food processor) and a white kettle. The bottom shelf appears to be empty. The background wall is painted yellow, and the floor is tiled with terracotta-colored tiles. The overall setting suggests a compact kitchen space with efficient use of vertical space.

  \vspace{2pt}
  \textbf{\Method.} The image shows a kitchen setup with a stainless steel gas \GROUND{stove} on the left and a small rolling cart to its right. Here's a detailed description: 1. \GROUND{Stove}:    - The \GROUND{stove} is a gas model with four burners on top.    - It has a digital control panel at the front, which appears to be displaying settings or times.    - Below the burners is an \GROUND{oven} with a black glass door, indicating it is a combination \GROUND{oven} (with both a conventional \GROUND{oven} and a broiler).    - The \GROUND{stove} has a sleek, modern design with a brushed metal finish. 2. Rolling Cart:    - The cart is made of metal and has three shelves.    - On the top shelf:      - There is a large silver pot, likely used for cooking or boiling water.      - Next to the pot, there is a \GROUND{bottle} that could contain oil or another cooking ingredient.    - On the middle shelf:      - A small red appliance, possibly a blender or food processor, is visible.      - A white kettle or teapot is also present.    - On the bottom shelf:      - A round object, possibly a lid or a lid cover, is placed on the shelf.      - A white corded device, potentially a small appliance like a food processor or mixer, is partially visible. 3. Background:    - The wall behind the \GROUND{stove} is painted yellow, adding warmth to the scene.    - To the right of the cart, part of a \RECOV{refrigerator} is visible, suggesting this is a compact kitchen space. 4. Flooring:    - The floor appears to be tiled, with a terracotta or similar color, complementing the warm tones of the room. This setup suggests a functional and organized kitchen area, with appliances and tools neatly arranged for easy access.
\end{minipage}
\end{minipage}



\noindent\begin{minipage}{\textwidth}
\paragraph{Example 5 (image 571893): kitchen pass-through with cookbooks.}
\textsc{Removed:} \emph{potted plant}; \textsc{Recovered:} \emph{vase}; \textsc{Kept:} \emph{book, oven}.

\noindent
\begin{minipage}[t]{0.32\textwidth}
  \vspace{0pt}
  \includegraphics[width=\linewidth]{}
\end{minipage}\hfill
\begin{minipage}[t]{0.65\textwidth}
  \vspace{0pt}
  \footnotesize
  \textbf{Baseline.} This image shows a cozy and well-organized corner of a room, likely a kitchen or study area. The focal point is a built-in shelf with a dark marble-like finish, which holds an assortment of items. On the left side of the shelf, there's a black kettle on what appears to be a \GROUND{stovetop}, suggesting that this is part of a kitchen setup.  On the right side of the shelf, there is a collection of \GROUND{books} neatly arranged vertically. The spines of the \GROUND{books} display various colors and designs, indicating a diverse range of topics or genres. Next to the \GROUND{books}, there is a small \HALLU{potted plant} with vibrant pink flowers, adding a touch of nature and color to the scene. Above the shelf, a chandelier hangs from the ceiling, providing warm lighting for the space. To the far right, there is a decorative item resembling an evil eye amulet, often used as a talisman for protection. The walls are painted a soft yellow, complementing the overall warm and inviting atmosphere of the room. The reflection on the shiny surface below the shelf adds depth to the image, creating a mirrored effect that enhances the visual appeal.

  \vspace{2pt}
  \textbf{\Method.} This image shows a cozy and well-decorated corner of a room, likely a kitchen or dining area. Here's a detailed description: 1. Foreground (Mirror Reflection):    - The left side of the image features a mirror reflecting part of the scene.    - A black kettle is visible on what appears to be a \GROUND{stovetop} or countertop.    - A small black object, possibly a kitchen appliance or utensil, is placed near the kettle. 2. Background (Window Shelf):    - A window with a white frame is positioned above the mirror reflection.    - On the windowsill, there is an organized collection of \GROUND{books} arranged vertically. The spines of the \GROUND{books} are visible, suggesting they might be cookbooks or reference \GROUND{books} due to their size and color.    - To the right of the \GROUND{books}, there is a decorative \RECOV{vase} holding pink flowers, adding a touch of color and freshness to the space.    - A small chandelier hangs from the ceiling, providing soft lighting for the area below.    - A hanging ornament resembling an evil eye amulet is attached to the wall to the right of the window, adding a cultural or decorative element. 3. Wall and Color Scheme:    - The walls are painted a warm yellow tone, which gives the space a welcoming and inviting feel.    - The window frame and trim are white, creating a contrast against the yellow wall. 4. Lighting:    - The lighting in the room is soft and warm, likely coming from the chandelier and possibly additional light sources not visible in the image. Overall, the image conveys a sense of warmth, functionality, and personalization, with a mix of practical elements like the kettle and \GROUND{books}, and decorative touches like the flowers and evil eye ornament.
\end{minipage}
\end{minipage}

\vspace{10pt}

\noindent\begin{minipage}{\textwidth}
\paragraph{Example 6 (image 58350): two people on a couch eating pizza.}
\textsc{Removed:} \emph{chair}; \textsc{Recovered:} \emph{couch}; \textsc{Kept:} \emph{person, pizza}.

\noindent
\begin{minipage}[t]{0.32\textwidth}
  \vspace{0pt}
  \includegraphics[width=\linewidth]{}
\end{minipage}\hfill
\begin{minipage}[t]{0.65\textwidth}
  \vspace{0pt}
  \footnotesize
  \textbf{Baseline.} The image shows two women sitting indoors, likely in a casual setting such as a living room or a similar space. The \GROUND{woman} in the foreground is seated on a patterned \HALLU{chair} with a red and beige design. She has blonde hair styled up and is wearing a black top with a light-colored strap visible underneath. She appears to be eating a slice of \GROUND{pizza}, which she holds close to her mouth. Her expression seems neutral or slightly focused. The \GROUND{woman} in the background is wearing a green top and is partially out of focus. She is holding what looks like a yellow object, possibly a utensil or part of a game, and is looking toward the camera with a slight smile. The background is plain and light-colored, suggesting an indoor environment with minimal decoration. The overall atmosphere of the photo feels informal and relaxed.

  \vspace{2pt}
  \textbf{\Method.} The image shows two women sitting on a \RECOV{couch} in what appears to be an indoor setting, possibly a living room or a casual gathering space. The \GROUND{woman} in the foreground is holding a large slice of \GROUND{pizza} with a visible topping that looks like cheese and possibly tomato sauce. She has blonde hair styled up and is wearing a black top with a light-colored strap visible underneath. Her expression seems neutral as she holds the \GROUND{pizza}. The \GROUND{woman} in the background is wearing a green top and is partially out of focus. She appears to be holding a yellow object, possibly a utensil or tool, and her expression suggests she might be engaged in conversation or reacting to something happening off-camera. The \RECOV{couch} they are sitting on has a patterned design with a mix of red and beige tones. The overall atmosphere of the image feels casual and informal.
\end{minipage}
\end{minipage}

\vspace{10pt}

\noindent\begin{minipage}{\textwidth}
\paragraph{Example 7 (image 470924): outdoor pizza lunch.}
\textsc{Removed:} \emph{umbrella}; \textsc{Recovered:} \emph{fork, knife}; \textsc{Kept:} \emph{car, chair, dining table, person, pizza}.

\noindent
\begin{minipage}[t]{0.32\textwidth}
  \vspace{0pt}
  \includegraphics[width=\linewidth]{}
\end{minipage}\hfill
\begin{minipage}[t]{0.65\textwidth}
  \vspace{0pt}
  \footnotesize
  \textbf{Baseline.} The image shows a group of four \GROUND{people} sitting at an outdoor \GROUND{dining table}, enjoying a meal together. The setting appears to be a casual restaurant or café with wooden \GROUND{tables} and \GROUND{chairs}. The \GROUND{table} is set with plates of \GROUND{pizza}, glasses of beer, and cutlery. The \GROUND{pizzas} look freshly made, with visible toppings such as cheese, herbs, and possibly vegetables. The individuals are dressed casually, with one \GROUND{person} wearing a red jacket, another in a black hoodie, and the others in darker clothing. They seem relaxed and engaged in conversation, with some looking directly at the camera while others appear to be mid-conversation. The background reveals other diners seated at similar \GROUND{tables}, and the area is shaded by large \HALLU{umbrellas}, suggesting it's a sunny day. Beyond the \HALLU{umbrellas}, there is a street view with parked \GROUND{cars} and a sidewalk, indicating that the location is likely in an urban or suburban area. The atmosphere seems lively yet laid-back, typical of a social gathering over food.

  \vspace{2pt}
  \textbf{\Method.} This image shows four individuals seated at an outdoor \GROUND{dining table}, enjoying a meal together. The setting appears to be a casual restaurant or café with other patrons and \GROUND{tables} visible in the background. The \GROUND{table} is set with plates of \GROUND{pizza}, glasses of beer, and cutlery. The individuals seem relaxed and engaged in conversation. - Foreground:    - On the left, there's a \GROUND{person} wearing a red jacket and sunglasses, holding a \RECOV{fork} and \RECOV{knife}, seemingly about to eat.   - In the center, another individual is wearing a black hoodie and sunglasses, looking directly at the camera with a slight smile.   - To the right, a \GROUND{person} in a dark hoodie is also looking at the camera, appearing to be in mid-conversation. - Background:   - Other diners can be seen in the background, some seated at separate \GROUND{tables}, while others are standing or walking around.   - The outdoor seating area has wooden \GROUND{tables} and \GROUND{chairs}, and the ambiance suggests a sunny day with natural light illuminating the scene. - Food and Drinks:   - The \GROUND{table} is laden with several \GROUND{pizzas}, each with different toppings that appear to include cheese, vegetables, and possibly meat.   - There are multiple glasses of beer on the \GROUND{table}, indicating a casual dining experience. - Environment:   - The location seems to be a street-side café or restaurant with a view of a road and parked \GROUND{cars} in the distance.   - The weather looks pleasant, as suggested by the bright lighting and shadows cast by the \GROUND{people} and objects. The overall mood of the image is friendly and convivial, capturing a moment of shared enjoyment among friends or acquaintances.
\end{minipage}
\end{minipage}

\vspace{10pt}

\noindent\begin{minipage}{\textwidth}
\paragraph{Example 8 (image 340175): living and dining room with a bookshelf.}
\textsc{Recovered:} \emph{bench, potted plant}; \textsc{Kept:} \emph{book, chair, couch, dining table}.

\noindent
\begin{minipage}[t]{0.32\textwidth}
  \vspace{0pt}
  \includegraphics[width=\linewidth]{}
\end{minipage}\hfill
\begin{minipage}[t]{0.65\textwidth}
  \vspace{0pt}
  \footnotesize
  \textbf{Baseline.} This image depicts a cozy and well-lit living room with a mix of modern and rustic elements. The room features a large window that allows natural light to flood the space, offering a view of trees outside. The walls are painted a neutral gray tone, which complements the wooden flooring. In the center of the room is a wooden \GROUND{dining table} with a simple design, accompanied by red \GROUND{chairs} with a modern aesthetic. The \GROUND{table} appears to be made from reclaimed wood, adding a touch of rustic charm. A small decorative item, possibly a candle or ornament, sits on the \GROUND{table} near the window. To the right of the dining area is a dark brown leather \GROUND{sofa}, positioned against the wall. In front of the \GROUND{sofa} lies a cowhide rug, adding texture and warmth to the space. A matching ottoman sits next to the \GROUND{sofa}, providing additional seating or a place to rest one's feet. On the left side of the room, there is a tall bookshelf filled with \GROUND{books}, suggesting an interest in reading or collecting literature. The bookshelf has a warm, wooden finish that matches the flooring. Above it, a decorative chandelier with a circular, flower-like design hangs from the ceiling, adding a unique and artistic touch to the room. The overall atmosphere of the room is inviting and comfortable, blending functionality with style. The combination of natural materials like wood and leather, along with the modern furniture and decor, creates a harmonious and stylish living space.

  \vspace{2pt}
  \textbf{\Method.} This image depicts a cozy and well-lit living room with a mix of modern and rustic elements. Here's a detailed description: 1. Furniture and Layout:    - The room features a wooden parquet floor, which adds warmth to the space.    - A dark red leather \GROUND{sofa} is positioned against the wall on the right side, accompanied by a small white ottoman and a minimalist \GROUND{chair} with a curved metal frame.    - On the left side, there is a wooden \GROUND{dining table} with a simple design, surrounded by bright red \GROUND{chairs} that contrast with the neutral tones of the room. 2. Lighting and Decor:    - Two large, decorative chandeliers hang from the ceiling, adding an artistic touch to the room.    - Natural light streams in through a large window in the center of the back wall, which offers a view of trees outside, suggesting a peaceful setting.    - The window is framed by a radiator below it, and the wall above the radiator has a few framed pictures or artwork. 3. Decorative Elements:    - A bookshelf filled with \GROUND{books} occupies the far left corner, indicating a love for reading or a scholarly interest.    - A cowhide rug lies on the floor near the \GROUND{sofa}, adding texture and a natural element to the room.    - A small \RECOV{potted plant} sits on the windowsill, bringing a touch of greenery indoors.    - A colorful abstract painting hangs on the wall to the left of the window, adding vibrancy to the otherwise neutral palette. 4. Other Features:    - A wooden \RECOV{bench} with a simple design is placed near the \GROUND{dining table}, offering additional seating.    - The walls are painted in a muted gray tone, which complements the warm wood tones and red accents.    - The overall aesthetic combines functionality with a touch of artistic flair, creating a comfortable and inviting atmosphere.
\end{minipage}
\end{minipage}

\vspace{10pt}

\label{tab:qualitative_appendix}

\end{document}